\newtheorem{theorem}{Theorem}
\newtheorem{lemma}{Lemma}
\newtheorem{corollary}[theorem]{Corollary}
\theoremstyle{definition}
\theoremstyle{remark}
\newtheorem{remark}[theorem]{Remark}
\newcommand{\privee}{\texttt{PRIVEE}}
\title{PRIVEE: Order-Preserving Confidence Perturbation for Privacy-Preserving Vertical Federated Learning}
\title{PRIVEE: Order-Preserving Confidence Perturbation for Privacy-Preserving Vertical Federated Learning}
\author {
    Sindhuja Madabushi \textsuperscript{\rm 1},
    Haider Ali \textsuperscript{\rm 1},
    Ahmad Faraz Khan \textsuperscript{\rm 1},
    Rui Ning \textsuperscript{\rm 2},
    Hongyi Wu \textsuperscript{\rm 3},
    Chunsheng Xin \textsuperscript{\rm 2},
    Ali. R, Butt \textsuperscript{\rm 1},
    Jin-Hee Cho \textsuperscript{\rm 1},
}
\begin{document}

\maketitle

\begin{abstract}
Vertical Federated Learning (VFL) enables collaborative model training across organizations that share common user samples but hold disjoint feature spaces. Despite its potential, VFL is susceptible to feature inference attacks, in which adversarial parties exploit shared confidence scores (prediction probabilities) during inference to reconstruct private input features of other participants.  To counter this threat, we propose {\privee} (\textit{PRI}vacy-preserving \textit{V}ertical f\textit{E}derated l\textit{E}arning), a novel defense mechanism named after the French word \textit{priv\'ee}, meaning “private.” {\privee} obfuscates confidence scores while preserving critical properties such as relative ranking and inter-score distances. Rather than exposing raw scores, {\privee} only shares transformed representations, mitigating risk of reconstruction attacks without degrading model prediction accuracy.  Extensive experiments show that {\privee} achieves up to a $30\times$ increase in reconstruction error (MSE) against feature inference attacks, compared to the strongest competing defense, while preserving full predictive performance against advanced feature inference attacks.
\end{abstract}

\section{Introduction}

\paragraph{Why FL Is Vulnerable.}
Federated learning (FL)~\cite{mcmahan2017communication} enables collaborative model training without sharing raw data and has been widely adopted in privacy-sensitive domains~\cite{li2020review}. However, keeping data local does not eliminate privacy risks. Recent studies show that adversaries can exploit intermediate information exchanged during inference to reconstruct private data. In particular, in vertical federated learning (VFL), shared confidence scores enable attackers to infer other clients' private feature representations~\cite{luo2021feature, yang2023practical, jiang2022comprehensive, chen2024fia}, exposing a critical inference-time privacy vulnerability.

\paragraph{Why VFL Is Different.}
Unlike horizontal FL, where clients share the same feature space, VFL partitions features across parties that own complementary attributes of the same entities. Consequently, inference requires aggregating intermediate representations from all parties to generate confidence scores. While essential for prediction, these shared confidence scores also create an attack surface for feature inference. Additional details of \privee's model architectures are provided in Appendix~E.

\paragraph{Why Existing Defenses Fail.}
Training-time privacy techniques such as Differential Privacy (DP) and Homomorphic Encryption (HE) do not prevent inference attacks that exploit released confidence scores. Existing defenses for centralized~\cite{srivastava2014dropout, nasr2018comprehensive, yang2020defending} and VFL settings~\cite{li2023efficient, jiang2022vf, zou2022defending, lai2023vfedad} either provide limited protection or incur substantial utility loss. Even lightweight approaches, including rounding, noise injection, and DP-based perturbation~\cite{luo2021feature, jiang2022comprehensive}, often degrade prediction accuracy. These limitations motivate \privee, an inference-time defense that substantially increases reconstruction error (MSE) while preserving prediction accuracy and computational efficiency.

We propose \privee, a privacy-enhancing VFL inference framework with the following \textbf{key contributions}:
\begin{itemize}

\item \texttt{PRIVEE} protects client privacy via an order-preserving perturbation of confidence scores with negligible overhead while preserving inference accuracy. Its two adaptive variants, \texttt{PRIVEE-U} and \texttt{PRIVEE-U+}, support diverse privacy and deployment requirements.

\item \texttt{PRIVEE} effectively defends against state-of-the-art feature inference attacks, including the Generative Regression Network Attack (GRNA) and Gradient Inversion Attack (GIA)~\cite{jiang2022comprehensive}, substantially increasing reconstruction error (MSE) across all evaluated datasets and model architectures.

\item \texttt{PRIVEE} outperforms existing defenses in the large majority of evaluated settings, achieving up to $30\times$ greater reconstruction error (MSE) than the strongest competing defense (e.g., $\approx$20--22$\times$ under GRNA on MNIST, Table~\ref{tab:grna25}; up to $\approx$60$\times$ under GIA on CIFAR-10, Appendix~C), while preserving prediction accuracy and millisecond-scale inference latency for real-time VFL applications. 

\item Unlike DP- and rounding-based defenses, \texttt{PRIVEE} preserves inter-class confidence ranking, supporting downstream tasks such as ensemble learning and knowledge distillation without sacrificing privacy.

\item Extensive ablation studies show that \texttt{PRIVEE} remains effective across varying federation scales, class cardinalities, and attack strengths, validating its robustness, scalability, and adaptability to real-world VFL deployments.

\end{itemize}


\section{Related Work}

\paragraph{Inference-Time Confidence Score Protection.} Confidence score sanitization reduces information leakage through model outputs. Prior work includes top-$k$ prediction release, confidence rounding, and temperature scaling~\cite{shokri2017membership}; dropout-based regularization~\cite{srivastava2014dropout,salem2018ml}; adversarial perturbation (MemGuard)~\cite{jia2019memguard}; and autoencoder-based confidence transformation (Purifier)~\cite{yang2020defending}. However, these methods may alter prediction rankings, reduce accuracy, require retraining, or incur additional computational overhead.

Differential privacy (DP) has primarily been studied for training~\cite{abadi2016deep,jayaraman2018distributed,shukla2025federated,adnan2025framework,wang2025fedmps,demelius2025recent}. Representative inference-time methods include OP$\lambda$~\cite{roy2022strengthening} (a DP mechanism layered on top of order-preserving encryption, distinct from the standalone OPE baseline evaluated in Section~\ref{sec:exp-setup}), the one-parameter defense~\cite{ye2022one}, and FISIP~\cite{huang2011fisip}, which attempt to balance confidence obfuscation, ranking preservation, and utility. Encryption-based approaches, including order-preserving encryption~\cite{boldyreva2009order,popa2013ideal,roche2016pope,maffei2017security} and homomorphic encryption~\cite{chatterjee2017sorting,hong2021efficient,verma2022efficient}, preserve ordering or support secure computation but often incur substantial computational and communication overhead. In contrast, \texttt{PRIVEE} employs a lightweight rank-aware transformation that preserves prediction ordering without expensive cryptographic operations.

\paragraph{Defenses Against Feature Inference in VFL.}
Existing VFL defenses include ranked prediction release~\cite{rassouli2022privacy}, feature-subspace recovery (RVFR)~\cite{liu2021rvfr}, hash-based representation learning (HashVFL)~\cite{qiu2022all}, and gradient-alignment methods (FLSG)~\cite{fan2023flsg}. Although these approaches mitigate leakage at different stages of the VFL pipeline, none simultaneously provides exact prediction-order preservation, low inference-time overhead, and strong resistance to confidence-based feature inference. \texttt{PRIVEE} addresses this gap by perturbing confidence scores during inference while preserving both class ranking and prediction accuracy.

\section{Preliminaries}
\label{sec:preliminaries}

\paragraph{Threat Model.}
We adopt the canonical white-box active-party threat model of~\cite{jiang2022comprehensive} and extend it from the standard two-party setting to an $N$-party VFL system with a trusted coordinator. The active party $P_{\mathrm{act}}$ owns features $\mathbf{x}_{\mathrm{act}}$ and labels, while passive parties $P_{\mathrm{pas},i}$ hold complementary features $\mathbf{x}_{\mathrm{pas},i}$ ($i=1,\ldots,N-1$). Each party computes local logits, which the trusted coordinator aggregates into the confidence vector $\mathbf{c}$. A complete description of our two-party VFL training and inference workflow is provided in Appendix~D.

The adversary resides in $P_{\mathrm{act}}$ and has white-box access to the joint VFL model, its own features $\mathbf{x}_{\mathrm{act}}$, and the released confidence vector. Its goal is to reconstruct the private features $\mathbf{x}_{\mathrm{pas},i}$ of one or more passive parties, enabling direct comparison with established feature-inference benchmarks~\cite{jiang2022comprehensive}.

Extending feature inference to multi-party VFL introduces additional leakage from multiple passive parties and potential collusion among participants~\cite{jiang2022comprehensive}. We assume the trusted coordinator is honest and releases only perturbed confidence scores. Since \texttt{PRIVEE} operates as an inference-time post-processing layer on confidence scores, it is model-agnostic and applicable to confidence-based feature inference attacks. We evaluate \texttt{PRIVEE} against the two state-of-the-art attacks, GRNA and GIA~\cite{jiang2022comprehensive}; details of the evaluated attacks are provided in Appendix~B.

\subsection{Problem Statement}
\label{sec:problem-statement}

We formulate the defense objective as maximizing the adversary's reconstruction error while limiting the degradation in the original model's predictive performance:

\begin{equation}
\label{eq:privacy-utility-objective}
\begin{aligned}
\underset{\theta}{\operatorname{maximize}}
&\quad
\operatorname{MSE}\!\left(
d_{\mathrm{rec}}(\theta),d_{\mathrm{true}}
\right) \\
\operatorname{subject\ to}
&\quad
\left|
\mathcal{A}_{D}(\theta)-\mathcal{A}_{ND}
\right|
\leq \epsilon,
\end{aligned}
\end{equation}
where \(\theta\) denotes the defense parameters, \(d_{\mathrm{rec}}(\theta)\) is what the adversary reconstructs from the private data after applying the defense, and \(d_{\mathrm{true}}\) is the corresponding ground-truth sensitive data. The quantities \(\mathcal{A}_{D}(\theta)\) and \(\mathcal{A}_{ND}\) denote the VFL model's prediction accuracy with and without the defense, respectively. The parameter \(\epsilon\geq0\), for example \(\epsilon=0.01\), bounds the allowable accuracy degradation.

A larger reconstruction MSE indicates that the recovered private features deviate further from the true features and therefore corresponds to stronger protection against feature-inference attacks. The accuracy constraint ensures that these privacy gains do not substantially reduce the predictive utility of the underlying VFL model, yielding an explicit privacy--utility trade-off suitable for practical deployment.

\section{Proposed Defense: \privee} \label{sec:privee}

\begin{figure*}[t]
    \centering
    \includegraphics[width=\textwidth]{./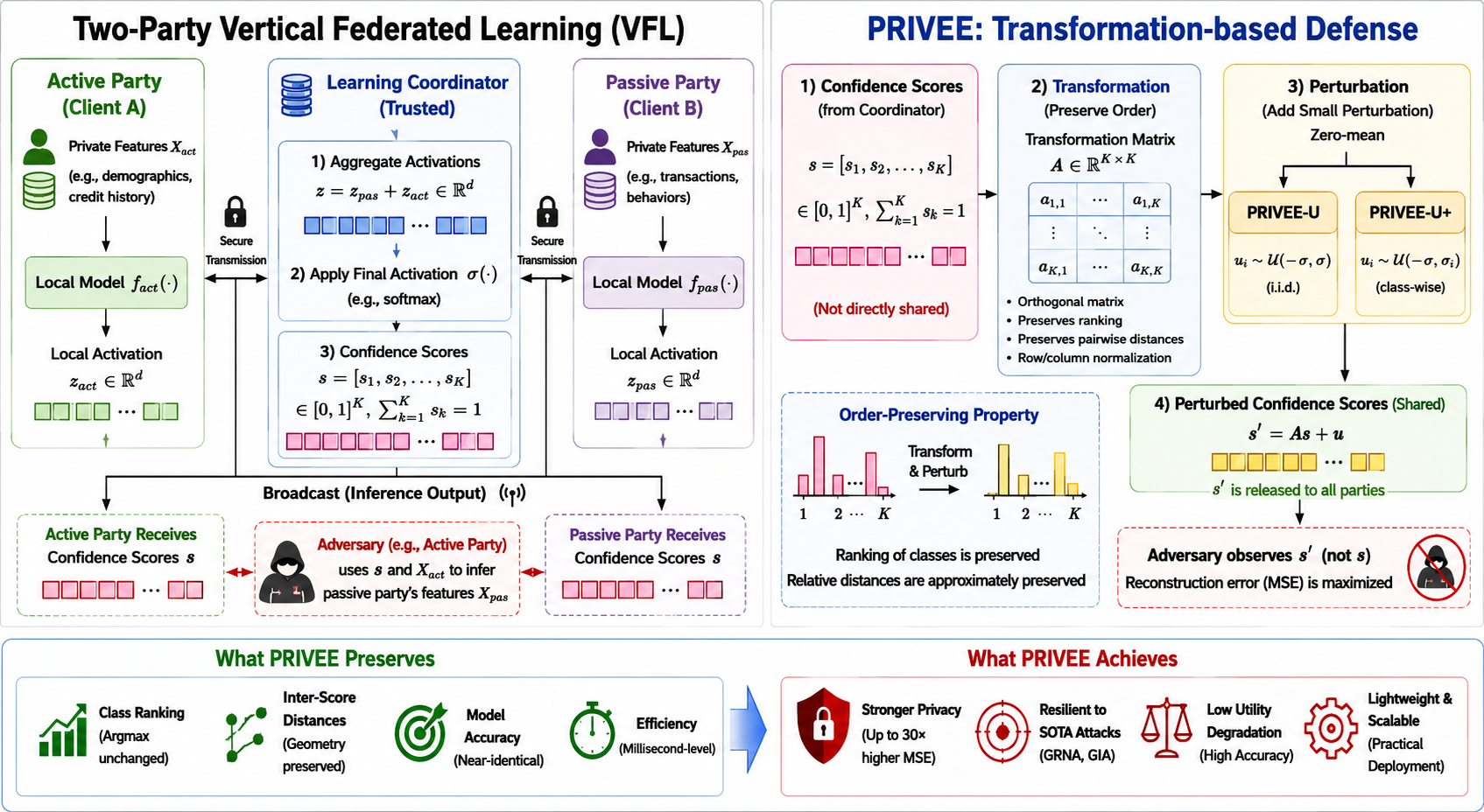}
\caption{\textbf{Overview of the proposed \texttt{PRIVEE} inference pipeline in two-party VFL.}
The coordinator applies an order-preserving confidence transformation followed by rank-aware perturbation before releasing inference outputs, preserving prediction correctness while mitigating feature inference attacks.}
    \label{fig:VFL-Inf-Defense}
\end{figure*}

\subsection{Overview of PRIVEE}

Figure~\ref{fig:VFL-Inf-Defense} illustrates the proposed \texttt{PRIVEE} framework for secure inference in two-party VFL. During inference, the active and passive parties independently compute local activations from their private features and transmit only these intermediate representations to a trusted coordinator, which aggregates them into a confidence vector and applies the
\texttt{PRIVEE} transformation, an order-preserving confidence transformation followed by rank-aware perturbation, before releasing the output. This preserves the predicted class while reducing leakage to confidence-based feature inference attacks. The framework includes two variants, \texttt{PRIVEE-U} and \texttt{PRIVEE-U+}, offering different perturbation strategies with negligible inference overhead.

\subsection{Privacy-Preserving Confidence Score Transformation}

In VFL, both active and passive parties utilize confidence score vectors to rank classes by their relative importance and estimate the likelihood of each class. Preserving the \textit{ranking order} of these scores is crucial for maintaining inference accuracy; however, retaining their absolute distances is not always necessary. To address this, our proposed defense transforms the confidence outputs to obscure class magnitudes while preserving their sorted order, thereby safeguarding privacy without sacrificing model utility.

We introduce two defense mechanisms that strike a trade-off between privacy protection, mean squared error (MSE), and inference accuracy. \textbf{1)} \texttt{PRIVEE-U}: Preserves the class ranking while perturbing confidence scores using rank-aware uniform noise, governed by perturbation parameter $\rho$; \textbf{2)} \texttt{PRIVEE-U+}: Extends \texttt{PRIVEE-U} by assigning per-class perturbation parameters $\rho_j$, offering finer-grained magnitude protection for classes with varying confidence profiles.

While prior defenses such as \textit{Purifier}~\cite{yang2020defending} and \textit{Rounding}~\cite{shokri2017membership} partially obscure confidence scores, they fail to simultaneously preserve inference accuracy. In contrast, \texttt{PRIVEE} maintains class rankings and predictive performance under strong privacy guarantees.

\subsection{Order-Preserving Perturbation}
Let $K$ denote the number of classes. We apply a transformation based on a family of orthonormal matrices $\mathbf{A} \in \mathbb{R}^{K \times K}$ as the foundation for our perturbation mechanism. Valid choices for $\mathbf{A}$ include the identity matrix $\mathbf{I}_K \in \mathbb{R}^{K \times K}$ and the negative of the first- and second-order sum and inner product-preserving (FISIP) matrix~\cite{huang2011fisip}.
Let the vector of confidence scores be denoted by $\mathbf{c} = [c_1, c_2, \ldots, c_K]^\top \in \mathbb{R}^K$. Let $\mathbf{r} \in \{1, 2, \ldots, K\}^K$ denote the ranking vector, where $r_i < r_j$ whenever $c_i > c_j$, and let $\mathbf{d} = [d_1, d_2, \ldots, d_K]^\top \in \mathbb{R}^K$ denote the transformed confidence scores that preserve relative distances.

The ranking vector $\mathbf{r}$ is computed as:
\begin{equation}
\mathbf{r} = \mathrm{rank}(\mathbf{c}) := \mathrm{argsort}(\mathrm{argsort}(-\mathbf{c})) + \mathbf{1}_K,
\end{equation}
where $\mathbf{1}_K$ is the $K$-dimensional vector of ones. This formulation ensures that higher confidence values are assigned lower rank indices, aligning with conventional ranking semantics.

We compute an initial vector of transformed confidence scores $\mathbf{d} \in \mathbb{R}^K$ using the following linear transformation $\mathbf{d} = \mathbf{A} \mathbf{c}$, 
where $\mathbf{A} \in \mathbb{R}^{K \times K}$ is an orthonormal matrix. This transformation preserves both pairwise distances and correlations. In particular, for specific choices of $\mathbf{A}$, such as the identity matrix $\mathbf{I}_K$ or the negative of the first- and second-order sum and inner product-preserving (FISIP) matrix~\cite{huang2011fisip}, ranking can also be preserved. One such example is:

\begin{equation}
\mathbf{A} = \mathbf{I}_K - \frac{2}{K} \mathbf{1}_K \mathbf{1}_K^\top.
\end{equation}

\begin{lemma}
\label{lem:distance-corr-preserving}
If $\mathbf{A} \in \mathbb{R}^{K \times K}$ is an orthonormal matrix, then the linear transformation $\mathbf{c} \mapsto \mathbf{A}\mathbf{c}$ is both \textbf{distance-preserving} and \textbf{correlation-preserving}.
\end{lemma}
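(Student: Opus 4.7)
The plan is to use the defining property of an orthonormal matrix, namely $\mathbf{A}^\top \mathbf{A} = \mathbf{I}_K$, to show that the Euclidean inner product is invariant under left multiplication by $\mathbf{A}$, and then derive both claims as direct consequences, since the Euclidean norm, pairwise distance, and cosine similarity are all built from the inner product.

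First I would establish the key invariance: for any $\mathbf{u}, \mathbf{v} \in \mathbb{R}^K$,
\begin{equation}
\langle \mathbf{A}\mathbf{u}, \mathbf{A}\mathbf{v} \rangle = (\mathbf{A}\mathbf{u})^\top (\mathbf{A}\mathbf{v}) = \mathbf{u}^\top \mathbf{A}^\top \mathbf{A} \mathbf{v} = \mathbf{u}^\top \mathbf{v} = \langle \mathbf{u}, \mathbf{v} \rangle.
\end{equation}
Specializing to $\mathbf{u} = \mathbf{v}$ immediately gives norm preservation: $\lVert \mathbf{A}\mathbf{u} \rVert_2 = \lVert \mathbf{u} \rVert_2$.

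Next, for the distance-preserving property, I would apply the norm-preservation identity to the difference vector $\mathbf{c} - \mathbf{c}'$, using linearity of $\mathbf{A}$, to obtain
\begin{equation}
\mathrm{dist}(\mathbf{A}\mathbf{c}, \mathbf{A}\mathbf{c}') = \lVert \mathbf{A}\mathbf{c} - \mathbf{A}\mathbf{c}' \rVert_2 = \lVert \mathbf{A}(\mathbf{c} - \mathbf{c}') \rVert_2 = \lVert \mathbf{c} - \mathbf{c}' \rVert_2 = \mathrm{dist}(\mathbf{c}, \mathbf{c}').
\end{equation}
For the correlation-preserving property, I would combine the inner product identity in the numerator with the norm-preservation identity applied to $\mathbf{c}$ and $\mathbf{c}'$ separately in the denominator of the cosine similarity, yielding $\mathrm{corr}(\mathbf{A}\mathbf{c}, \mathbf{A}\mathbf{c}') = \mathrm{corr}(\mathbf{c}, \mathbf{c}')$. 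A small edge-case remark is needed to ensure the denominator is nonzero, but since $\mathbf{A}$ is nonsingular, $\mathbf{A}\mathbf{c} = \mathbf{0}$ iff $\mathbf{c} = \mathbf{0}$, so the cosine similarity is defined on exactly the same domain before and after the transformation.

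There is no real obstacle here; the only thing to be careful about is stating clearly which convention of ``orthonormal matrix'' is in use (i.e., $\mathbf{A}^\top \mathbf{A} = \mathbf{I}_K$, equivalent to the columns forming an orthonormal basis of $\mathbb{R}^K$), since the entire argument is a one-line application of this identity. I would therefore open the proof by invoking this defining relation explicitly, and close by noting that order preservation is \emph{not} claimed in this lemma and must be verified separately for the specific choices of $\mathbf{A}$ (such as $\mathbf{I}_K$ or the FISIP construction) considered in the paper.
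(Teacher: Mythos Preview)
Your proposal is correct and follows essentially the same route as the paper: invoke $\mathbf{A}^\top \mathbf{A} = \mathbf{I}_K$, show inner-product (hence norm) invariance, and read off distance and cosine-similarity preservation. Your version is slightly more explicit (deriving norm preservation first and noting the zero-vector edge case), but the underlying argument is identical to the paper's.
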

\begin{proof}
See Appendix~A for the proof.
\end{proof}

The proposed method, \privee, obfuscates the raw confidence scores while preserving their order and maintaining downstream inference accuracy.
\texttt{PRIVEE-U} applies rank-aware uniform perturbation to the matrix $\mathbf{A}$
to protect confidence score magnitudes while preserving their ranking. Each diagonal
entry of $\mathbf{A}$ is perturbed according to a rank-assigned sub-interval draw
$u_j \sim \mathrm{Uniform}(I_{k_j})$, scaled by perturbation parameter $\rho$, giving
$A^{\mathrm{pert}}_{jj} = A_{jj} + u_j \cdot \sigma$ where $\sigma = C/\rho$ and $C$ is some constant.
The rank-aware assignment of subintervals guarantees that the transformation preserves the ordering of the confidence scores exactly. As an extension of \texttt{PRIVEE-U}, \texttt{PRIVEE-U+} replaces the shared perturbation scale $\sigma$ with class-specific perturbation scales $\sigma_j$. This modification is particularly beneficial for datasets with many classes, where using a single shared value of $\sigma$ may produce insufficient or uneven perturbation across classes, potentially creating privacy vulnerabilities.

\begin{lemma}
\label{lem:order-preserving}
The transformation \(c \mapsto A^{\mathrm{pert}}c\) is order-preserving with probability \(1\) when \(A = I_K\) or \(A = I_K - \frac{2}{K}\mathbf{1}_K\mathbf{1}_K^{\top}\), where \(A_{jj}^{\mathrm{pert}} = A_{jj} + u_j\sigma\), \(u_j \sim \operatorname{Uniform}(I_{k_j})\), \(k_j = K+1-\operatorname{rank}(c)_j\), and \(\sigma\) is shared across all classes.
\end{lemma}

\begin{proof}
See Appendix~A for the proof.
\end{proof}

\begin{remark}
For \texttt{PRIVEE-U+}, where $\sigma$ is replaced by per-class parameters $\sigma_j = C/\rho_j$,
order preservation holds with probability 1 under the additional condition that
$\sigma_j$ is non-decreasing with $k_j$, i.e., $\sigma_i \geq \sigma_j$ whenever
$c_i > c_j$. See Appendix~A for the argument.
\end{remark}

While the unperturbed transformation $\mathbf{c} \mapsto \mathbf{A}\mathbf{c}$ preserves both distances and rankings, $\mathbf{A}$ itself is a fixed, public matrix. Therefore, an adversary who knows $\mathbf{A}$ can trivially invert $\mathbf{d} = \mathbf{A}\mathbf{c}$ to recover $\mathbf{c}$ exactly, providing no privacy protection on its own. To address this, we introduce a rank-aware multiplicative perturbation of the diagonal entries of $\mathbf{A}$, governed by per-class uniform draws
$u_j \sim \operatorname{Uniform}(I_{k_j})$.

Because each $u_j$ is drawn independently and freshly for every released confidence vector, the specific perturbed matrix $\mathbf{A}^{\mathrm{pert}}$ used for any given query is never known to an adversary, even though its general form, given in Algorithm~\ref{alg:PRIVEE-U}, is public. This prevents the naive recovery of $\mathbf{c}$ through a fixed, known inverse $\mathbf{A}^{-1}$, while the rank ordering of
$\mathbf{p} = \mathbf{A}^{\mathrm{pert}}\mathbf{c}$ is preserved exactly, as established in Lemma \ref{lem:order-preserving}. We formalize the resulting privacy guarantees in Section ~\ref{sec:theoretical-properties}.

\begin{algorithm}[tb]
\caption{\texttt{PRIVEE-U}: Rank-Aware Diagonal Perturbation}
\label{alg:PRIVEE-U}
\begin{algorithmic}[1]
  \STATE \textbf{Input:} Confidence scores $\mathbf{c} \in \mathbb{R}^K$,
  perturbation parameter $\rho > 0$, scaling constant $C > 0$, and
  matrix $\mathbf{A} \in \mathbb{R}^{K \times K}$
  
  \STATE \textbf{Output:} Perturbed confidence scores
  $\mathbf{p} \in \mathbb{R}^K$
  
  \STATE $\sigma \gets C/\rho$
  \STATE Divide $[0,1]$ into $K$ equal subintervals, denoted by
  $I_1,\ldots,I_K$
  \STATE $\mathbf{A}^{\mathrm{pert}} \gets \mathbf{A}$
  
  \FOR{$j = 1$ \TO $K$}
    \STATE $k \gets K + 1 -
    \bigl(\operatorname{rank}(\mathbf{c})\bigr)_j$
    \STATE Sample $u_j \sim \operatorname{Uniform}(I_k)$
    \STATE $A^{\mathrm{pert}}_{jj}
    \gets A_{jj} + u_j\sigma$
  \ENDFOR
  
  \STATE $\mathbf{p} \gets
  \mathbf{A}^{\mathrm{pert}}\mathbf{c}$
  \STATE \textbf{return} $\mathbf{p}$
\end{algorithmic}
\end{algorithm}

Algorithm~\ref{alg:PRIVEE-U} summarizes this rank-aware perturbation procedure in full; \texttt{PRIVEE-U+} follows the same steps with per-class scales $\sigma_j = C/\rho_j$ in place of the shared $\sigma$.

\begin{table}[t]
\centering
\scriptsize
\setlength{\tabcolsep}{6pt}
\renewcommand{\arraystretch}{1.15}
\caption{Computation overhead of defense methods for varying numbers of classes. Lower runtime is better. Confidence scores were randomly generated, and all methods were evaluated independently of model training.}
\label{tab:defense_times}

\begin{tabular}{cccc}
\hline
\textbf{\# Classes} &
\textbf{\texttt{PRIVEE}} &
\textbf{OPE} &
\textbf{Top-5 HE} \\
\hline
10     & \textbf{1.6 ms}   & 17.2 ms & 7 min  \\
100    & \textbf{3.8 ms}   & 52.5 ms & 25 min \\
1,000  & \textbf{12.0 ms}  & 463.5 ms & 40 min \\
10,000 & \textbf{145.4 ms} & 6.95 s & 106 min \\
\hline
\end{tabular}
\end{table}

\section{Theoretical Properties of \privee}
\label{sec:theoretical-properties}
We provide a formal characterization of what \texttt{PRIVEE-U} preserves, what privacy it guarantees for a single released confidence vector, and formalization of its protection under repeated queries. This guarantee is a distinct, ambiguity-based notion of privacy and is not a formal $(\varepsilon,\delta)$-differential-privacy guarantee; Section~\ref{sec:conclusion-future-work} discusses this distinction as an explicit limitation.
We prove that adding the rank-aware diagonal perturbation does not preserve distances and correlations exactly, because the perturbed matrix is generally no longer orthonormal. 
However, the deviation is deterministically bounded by $\epsilon(\sigma)=2\sigma+\sigma^2$. When $\epsilon(\sigma)<1$, the transformed Euclidean distance remains within explicit multiplicative lower and upper bounds, and the corresponding normalized inner-product correlation distortion is also bounded.
Thus, the theoretically valid utility claim is bounded, not exact preservation under perturbation. Second, we analyze a single released vector. Because each unknown perturbation draw $u_j$ lies in a known rank-dependent interval, every original confidence score $c_j$ can be bounded within a reconstruction interval. More importantly, we show that, with probability 1, the same released vector is consistent with an uncountable continuum of distinct valid confidence vectors. Therefore, a single release does not uniquely identify the original confidence magnitudes, although it still reveals their ordering because rank is preserved. Finally, we study repeated queries of the same input. We derive the exact probability that a minimum-based estimator reconstructs each confidence coordinate within a specified error after (T) queries. This proves that \texttt{PRIVEE-U} does not provide asymptotic privacy against unlimited repeated queries. Instead, the theory supports a precise query-budget condition that bounds the probability of successful high-accuracy reconstruction. Detailed proofs of all of the theoretical properties of \texttt{PRIVEE} are in Appendix~A.

\begin{table*}[t]
\centering
\scriptsize
\setlength{\tabcolsep}{5pt}
\renewcommand{\arraystretch}{1.15}
\caption{Training Accuracy Change (\%) Under the GRN Attack (Attack Strength = 50\%). OPE, \texttt{PRIVEE-U}, and \texttt{PRIVEE-U+} are exactly $0.00$ by construction: all three transformations are provably rank-preserving (Lemma~\ref{lem:order-preserving} for \texttt{PRIVEE}) and therefore never change the predicted class.}
\label{tab:train_acc_change_50}

\begin{tabular}{lrrrrrrr}
\toprule
\textbf{Dataset} &
\textbf{R(1)} &
\textbf{R(2)} &
\textbf{DP ($\varepsilon=0.5$)} &
\textbf{DP ($\varepsilon=0.7$)} &
\textbf{OPE} &
\textbf{\texttt{PRIVEE-U}} &
\textbf{\texttt{PRIVEE-U+}} \\
\midrule
MNIST & $-0.011\pm0.003$ & $-0.009\pm0.004$ & $-63.742\pm1.185$ & $-50.517\pm0.943$ & $0.00\pm0.00$ & $0.00\pm0.00$ & $0.00\pm0.00$ \\
CIFAR10 & $0.190\pm0.052$ & $-0.020\pm0.010$ & $44.296\pm0.802$ & $38.108\pm0.871$ & $0.00\pm0.00$ & $0.00\pm0.00$ & $0.00\pm0.00$
 \\
CIFAR100 & $-0.512\pm0.024$ & $-0.046\pm0.009$ & $60.266\pm0.400$ & $56.504\pm0.526$ & $0.00\pm0.00$ & $0.00\pm0.00$ & $0.00\pm0.00$
 \\
Drive Diagnosis & $-0.648\pm0.027$ & $-0.052\pm0.011$ & $-56.214\pm0.736$ & $-41.683\pm0.612$ & $0.00\pm0.00$ & $0.00\pm0.00$ & $0.00\pm0.00$ \\
Adult Income & $+0.032\pm0.009$ & $-0.048\pm0.012$ & $-10.764\pm0.384$ & $-7.236\pm0.291$ & $0.00\pm0.00$ & $0.00\pm0.00$ & $0.00\pm0.00$ \\
\bottomrule
\end{tabular}
\end{table*}

\begin{table*}[t]
\centering
\scriptsize
\setlength{\tabcolsep}{5pt}
\renewcommand{\arraystretch}{1.15}
\caption{Gradient Inversion Attack (GIA) results on MNIST with attack strength $0.5$. Each entry reports the MSE before and after applying the defense (Before, After) for different numbers of clients. Higher post-defense MSE indicates stronger protection against gradient inversion.}
\label{tab:mnist-gia-as05}

\begin{tabular}{lccccc}
\hline
\textbf{Defense} & \textbf{5 Clients} & \textbf{10 Clients} & \textbf{15 Clients} & \textbf{20 Clients} & \textbf{25 Clients} \\
\hline
R(1)
& 0.1097, 0.1102
& 0.0466, 0.1097
& 0.0916, 0.0920
& 0.0910, 0.0914
& 0.0899, 0.0902 \\

R(2)
& 0.1051, 0.1052
& 0.0944, 0.0945
& 0.0927, 0.0927
& 0.0956, 0.0956
& 0.0915, 0.0916 \\

OPE
& 0.1054, 0.3012
& 0.0985, 0.2978
& 0.0977, 0.2947
& 0.1022, 0.3026
& 0.0935, 0.2927 \\

DP ($\varepsilon=0.5$)
& 0.1009, 0.9680
& 0.0999, 0.9631
& 0.0915, 0.9461
& 0.0901, 0.9438
& 0.0887, 0.9410 \\

DP ($\varepsilon=1$)
& 0.0848, 0.2911
& 0.1031, 0.3147
& 0.0994, 0.3094
& 0.0975, 0.3066
& 0.0977, 0.3069 \\

\texttt{PRIVEE-U} ($\rho=0.1$)
& 0.1097, 19.0200
& 0.0950, 20.2756
& 0.0961, 19.3819
& 0.0947, 20.3509
& 0.0850, 20.5965 \\

\texttt{PRIVEE-U+}
& 0.1042, 2.6439
& 0.0969, 2.4721
& 0.0980, 2.4146
& 0.0887, 2.3652
& 0.0888, 2.4378 \\
\hline
\end{tabular}
\end{table*}

\begin{table*}[t]
\centering
\scriptsize
\setlength{\tabcolsep}{5pt}
\renewcommand{\arraystretch}{1.15}
\caption{MSE of the GRN attack under different defense mechanisms across datasets with attack strength $25\%$. Higher MSE indicates stronger resistance to gradient reconstruction attacks.}
\label{tab:grna25}

\begin{tabular}{lrrrrrrrr}
\hline
\textbf{Dataset} & \textbf{No Defense} & \textbf{R(1)} & \textbf{R(2)} & \textbf{OPE} & \textbf{DP ($\varepsilon=0.5$)} &
\textbf{DP ($\varepsilon=0.7$)} &
\textbf{\texttt{PRIVEE-U}} &
\textbf{\texttt{PRIVEE-U+}} \\
\hline
MNIST
& $0.062\pm0.009$ & $0.059\pm0.003$ & $0.063\pm0.004$ & $0.280\pm0.003$ & $0.896\pm0.001$ & $0.480\pm0.000$ & $\textbf{19.070}\pm \textbf{1.091}$ & $2.715\pm0.000$ \\

CIFAR100
& $0.140\pm0.013$ & $0.013 \pm 0.000$ & $0.013 \pm 0.00$ & $0.329 \pm 0.00$ & $0.944 \pm 0.000$ & $0.4876 \pm 0.0006$ & $1.693 \pm 0.007$ & $\textbf{2.652} \pm \textbf{0.097}$\\

CIFAR10
& $0.1308 \pm 0.0031$ & $0.126 \pm 0.003$ & $0.126 \pm 0.002$ & $0.374 \pm 0.003$ & $0.990 \pm 0.004$ & $0.559 \pm 0.001$ & $\textbf{11.522} \pm \textbf{0.582}$ & $1.482 \pm 0.051$ \\

Drive Diagnosis
& $0.1390\pm0.007$ & $0.151\pm0.017$ & $0.156\pm0.009$ & $0.381\pm0.001$ & $1.0818\pm0.001$ & $0.621\pm0.004$ & $\textbf{19.25}\pm \textbf{0.091}$ & $2.704\pm0.087$ \\

Adult Income
& $0.229\pm0.006$ & $0.232\pm0.009$ & $0.276\pm0.011$ & $0.264\pm0.007$ & $1.181\pm0.018$ & $0.724\pm0.013$ & $\textbf{5.498}\pm\textbf{0.052}$ & $\textbf{4.829}\pm\textbf{0.046}$\\
\hline
\end{tabular}
\end{table*}
   
\begin{table*}[t]
\centering
\scriptsize
\setlength{\tabcolsep}{5pt}
\renewcommand{\arraystretch}{1.15}
\caption{MSE of the GRN attack under different defense mechanisms across datasets with attack strength $50\%$. Higher MSE indicates stronger resistance to gradient reconstruction attacks.}
\label{tab:grna50}

\begin{tabular}{lrrrrrrrr}
\hline
\textbf{Dataset} &
\textbf{No Defense} &
\textbf{R(1)} &
\textbf{R(2)} &
\textbf{OPE} &
\textbf{DP ($\varepsilon=0.5$)} &
\textbf{DP ($\varepsilon=0.7$)} &
\textbf{\texttt{PRIVEE-U}} &
\textbf{\texttt{PRIVEE-U+}} \\
\hline
MNIST
& $0.103\pm0.003$ & $0.106\pm0.004$ & $0.103\pm0.006$ & $0.307\pm0.008$ & $0.965\pm0.014$ & $0.541\pm0.010$ & $\mathbf{20.491\pm1.145}$ & $2.503\pm 0.039$ \\

CIFAR100
& $0.012\pm0.000$ & $0.013\pm0.000$ & $0.012\pm0.000$ & $0.329\pm0.000$ & $0.943\pm0.006$ & $0.487\pm0.004$ & $1.568\pm0.008$ & $\mathbf{2.381\pm0.228}$ \\

CIFAR10
& $0.124\pm0.000$ & $0.126\pm0.000$ & $0.124\pm0.000$ & $0.362\pm0.000$ & $0.955\pm0.003$ & $0.534\pm0.001$ & $\mathbf{10.874\pm0.649}$ & $1.534\pm0.116$ \\

Drive Diagnosis
& $0.133\pm0.006$ & $0.143\pm0.009$ & $0.149\pm0.008$ & $0.348\pm0.007$ & $1.019\pm0.015$ & $0.606\pm0.011$ & $\mathbf{16.074\pm0.084}$ & $2.401\pm0.043$ \\

Adult Income
& $0.232\pm0.008$ & $0.229\pm0.007$ & $0.263\pm0.010$ & $0.274\pm0.009$ & $1.117\pm0.019$ & $0.761\pm0.014$ & $\mathbf{6.548\pm0.061}$ & $3.908\pm0.052$ \\
\hline
\end{tabular}
\end{table*}

\section{Experimental Setup} \label{sec:exp-setup}

\paragraph{Datasets.}
We evaluate \texttt{PRIVEE} on five benchmarks: two tabular datasets, Drive Diagnosis~\cite{dua2017} and ADULT Income~\cite{adult_2}, and three image datasets, MNIST~\cite{deng2012mnist}, CIFAR-10, and CIFAR-100~\cite{krizhevsky2009learning}. These datasets cover binary and multiclass tasks with up to 100 classes (CIFAR-100), enabling evaluation across different data modalities and class counts.

\paragraph{Metrics.}

We use the following evaluation metrics:
\begin{itemize}
    \item \textbf{Attacker’s Mean Squared Error (MSE)} quantifies privacy as the MSE between reconstructed and true data:
    \begin{equation}
    \mathrm{MSE} = \frac{1}{n \cdot d_{\mathrm{target}}} \sum_{t=1}^{n} \sum_{i=1}^{d_{\mathrm{target}}} (\hat{x}^{t}_{\mathrm{target},i} - x^{t}_{\mathrm{target},i})^2,
    \end{equation}
    where $n$ is the number of samples and $d_{\mathrm{target}}$ the number of target features. Higher MSE indicates better privacy.
    
    \item \textbf{Change in Accuracy (CA)} measures accuracy degradation due to the defense:
    \begin{equation}
    \Delta \mathcal{A} = \mathcal{A}_{D} - \mathcal{A}_{ND},
    \end{equation}
    where $\mathcal{A}_{D}$ and $\mathcal{A}_{ND}$ denote accuracies with and without the defense.

\end{itemize}

\paragraph{Comparison with Baselines}

We compare \texttt{PRIVEE-U} and \texttt{PRIVEE-U+} against the following SOTA defenses:
\begin{itemize}
    \item \textbf{Top-$k$ Homomorphic Encryption Sorting}~\cite{verma2022efficient}: Returns encrypted sorting of the top-$k$ confidence scores ($k=5$).  In preliminary experiments, this approach incurred substantial overhead on datasets with tens of thousands of samples, with each training run taking approximately 20 minutes per epoch. As a result, we excluded it from the full attack evaluation. While Top-$k$ Homomorphic Encryption, like OPE, preserves model accuracy and protects confidence values, it is not computationally lightweight.

    \item \textbf{Order-Preserving Encryption (OPE)}~\cite{maffei2017security}: Obfuscates duplicate frequencies while preserving score order.

    \item \textbf{Rounding}~\cite{shokri2017membership}: Reduces precision of confidence scores to $b$ floating-point digits; we report two precisions, R(1) and R(2), corresponding to $b=1$ and $b=2$ decimal digits, respectively.

    \item \textbf{Differential Privacy (DP)}~\cite{holohan2019diffprivlib}: Implements the Gaussian mechanism via IBM’s \texttt{diffprivlib} to ensure formal privacy guarantees; 
\end{itemize}

Full hyperparameter settings, model architectures, random-seed configuration, and computing infrastructure for all methods are provided in Appendix~E.

\section{Results \& Analyses}
\label{sec:results-analyses}

\subsection{Effect of Attacks on \texttt{PRIVEE}}

\paragraph{\bf GRN Attack.}
Tables \ref{tab:grna25} and \ref{tab:grna50} show the effect of the GRN attack across datasets and baseline defenses at 25\% and 50\% attack strength (Appendix~C reports the 75\%-strength results; the pattern is broadly consistent, with above results). Across all datasets, the \textit{No Defense} setting yields very low reconstruction error, confirming that the GRN attack can nearly perfectly recover passive-party data from raw confidence scores. Adding standard differential privacy (DP) noise increases the MSE. For example, DP($\varepsilon = 0.5$) roughly doubles or triples the reconstruction error compared to DP($\varepsilon = 0.7$). In contrast, \texttt{PRIVEE-U} increases the MSE to the tens for simpler datasets (e.g., MNIST) and the single digits for most other datasets, substantially outperforming standard DP.

With $K=100$ classes, the $K$ equal-width sub-intervals from which \texttt{PRIVEE-U}'s shared perturbation scale $\sigma$ draws (Algorithm~\ref{alg:PRIVEE-U}) are an order of magnitude narrower than for the lower-class-count datasets, leaving less room to perturb any single class. \texttt{PRIVEE-U+}'s per-class scales $\sigma_j$ are not subject to this constraint and remain the strongest defense for CIFAR-100 across all three attack strengths (Tables~\ref{tab:grna25}--\ref{tab:grna50}, Appendix~C), making it the recommended variant for large-$K$ deployments.

These results indicate that, at 25\% attack strength, \texttt{PRIVEE‐U} provides robust defense across diverse data types, while \texttt{PRIVEE‐U+} offers additional benefits for mid-complexity image datasets. Effect if GIA attack on \privee is given in Appendix~C. The rationale for selecting the perturbation-amplitude constant
$C=0.48$, together with details of the corresponding hyperparameter
grid search, is provided in Appendix~E.

\subsection{Effect of Increasing Number of Classes}

While all baselines perform efficiently on a 10-class problem, encryption-based methods become increasingly inefficient as class count grows (Table~\ref{tab:defense_times}). \texttt{PRIVEE-U} scales near-linearly, with runtime increasing modestly from 0.0016~sec. at 10 classes to 0.1454~sec. at 10,000 classes, making it well suited for large-scale applications.  This makes \texttt{PRIVEE-U} the preferred choice for large-scale problems: it is lightweight, accuracy-preserving, tunably private, and remains highly efficient across class sizes.

\subsection{Accuracy Analysis of \texttt{PRIVEE}}

Table~\ref{tab:train_acc_change_50} reports the change in training accuracy under the GRN attack with 50\% attack strength (baseline training and inference accuracies prior to any attack or defense are reported in Appendix~C). Rounding causes negligible accuracy degradation, whereas applying Differential Privacy (DP) directly to confidence scores substantially reduces accuracy. Although OPE preserves accuracy, its computational overhead grows rapidly with the number of classes, limiting scalability. In contrast, both \texttt{PRIVEE-U} and \texttt{PRIVEE-U+} incur no accuracy loss while providing encryption-level privacy. Unlike OPE, \texttt{PRIVEE} is lightweight and scalable, making it well suited for practical VFL deployments.

\subsection{Ablation Studies}

\begin{table}[t]
\centering
\scriptsize
\setlength{\tabcolsep}{1.5pt}
\renewcommand{\arraystretch}{1.15}
\caption{Effect of the number of clients and $\rho$ on GRN attack performance for MNIST using \texttt{PRIVEE-U}. Higher MSE indicates stronger resistance to gradient reconstruction. All configurations incur 0\% accuracy loss.}
\label{tab:mnist-grna-combined}

\begin{subtable}[t]{0.48\columnwidth}
\centering
\caption{Effect of the number of clients ($\rho=0.1$).}
\label{tab:mnist-grna01}

\begin{tabular}{cccc}
\hline
\rule{0pt}{2.5ex}
\textbf{\# Clients} &
\shortstack{\textbf{MSE}\\\textbf{Without}} &
\shortstack{\textbf{MSE}\\\textbf{With}} &
\shortstack{\textbf{Final}\\\textbf{Acc.}}\\
\hline
5  & 5.4242 & 21.1015 & 93.44 \\
10 & 4.7308 & 19.2787 & 93.77 \\
15 & 3.7724 & 17.1672 & 93.99 \\
20 & 3.9231 & 17.2209 & 94.22 \\
25 & 4.2270 & 18.0560 & 94.23 \\
\hline
\end{tabular}
\end{subtable}
\hfill
\begin{subtable}[t]{0.48\columnwidth}
\centering
\caption{Effect of $\rho$ with 25 clients.}
\label{tab:mnist-grna25}

\begin{tabular}{cccc}
\hline
\rule{0pt}{2.5ex}
\textbf{$\rho$} &
\shortstack{\textbf{MSE}\\\textbf{Without}} &
\shortstack{\textbf{MSE}\\\textbf{With}} &
\shortstack{\textbf{Final}\\\textbf{Acc.}}\\
\hline
0.05 & 4.2216 & 57.7185 & 94.03 \\
0.07 & 4.0732 & 26.8967 & 94.27 \\
0.10 & 3.8712 &  5.4833 & 94.17 \\
0.30 & 4.1242 &  5.8225 & 94.19 \\
0.50 & 4.2944 &  4.9572 & 94.17 \\
0.90 & 4.0411 &  4.1987 & 94.20 \\
\hline
\end{tabular}
\end{subtable}

\end{table}
Table~\ref{tab:mnist-gia-as05} reports MSE before and after applying each defense across baselines with varying numbers of clients. The results indicate that client count does not significantly affect relative MSE trends, as the exposure of confidence vectors remains the same regardless of federation size. Consequently, MSE before defense is consistent across client counts, while MSE after defense reflects only the effectiveness of the chosen defense. Since confidence scores are broadcast to all clients, the potential leakage of sensitive information is uniform, implying that scaling up the federation does not alter inference risks; only defenses explicitly designed to suppress leakage can provide meaningful protection.

To further evaluate the robustness of \texttt{PRIVEE}, we conducted experiments varying both the number of clients (Table~\ref{tab:mnist-grna01}) and the perturbation parameter $\rho$ (Table~\ref{tab:mnist-grna25}). The results provide two complementary insights.

First, with fixed $\rho$ (e.g., $\rho=0.1$ in Table~\ref{tab:mnist-grna01}), scaling the federation from 5 to 25 clients does not affect MSE (before or after defense) or accuracy. Since confidence vectors are broadcast to all clients, the attack surface remains unchanged regardless of federation size. Thus, \texttt{PRIVEE} ensures stable privacy guarantees as the number of participants grows, while maintaining model utility.

Second, with a fixed number of clients (25 in Table~\ref{tab:mnist-grna25}) and varying $\rho$, we observe the expected privacy–utility tradeoff. Smaller $\rho$ values yield larger gaps between MSE before and after defense, indicating stronger protection against GRN attacks, with no accuracy loss, the $\Delta$ accuracy is consistently zero, and final accuracy remains above $90\%$. As $\rho$ increases, protection weakens, but utility remains stable. 


To recap, these findings show: (i) in our tested MNIST/GRNA setting, \texttt{PRIVEE} is unaffected by the number of clients, suggesting it addresses scalability concerns in VFL and (ii) practitioners can tune $\rho$ to strengthen privacy without compromising accuracy, with smaller $\rho$ offering the most robust protection. For \texttt{PRIVEE-U}, guarantees depend on selecting appropriate $\rho$ ranges, while \texttt{PRIVEE-U+}, which preserves ordering (Lemma \ref{lem:order-preserving}) without affecting accuracy, is expected to exhibit similar trends.

\section{Conclusion \& Future Work} \label{sec:conclusion-future-work}

\paragraph{Summary.}
We introduced \texttt{PRIVEE}, a lightweight inference-time defense for VFL that obscures confidence magnitudes while preserving class rankings. Its two variants, \texttt{PRIVEE-U} and \texttt{PRIVEE-U+}, provide different levels of perturbation control.
Theoretically, \texttt{PRIVEE} preserves the complete ranking with probability one, bounds distortion in distances and correlations, and makes each released vector consistent with uncountably many possible originals. However, repeated queries can reduce adversarial uncertainty, motivating deployment query limits.
Empirically, \texttt{PRIVEE} substantially increased feature-reconstruction error across datasets, architectures, federation settings, and attack strengths while fully preserving model accuracy. It incurred only millisecond-scale overhead and avoided the privacy--utility trade-offs of conventional confidence-score DP.

\paragraph{Limitations.} \texttt{PRIVEE}'s guarantee is an ambiguity/reconstruction-hardness bound (Appendix~A), not a formal $(\varepsilon,\delta)$-differential-privacy guarantee, and should not be substituted for DP where a provable guarantee is required. The mechanism assumes an honest, non-colluding coordinator; if the coordinator is compromised or colludes with the active party, \texttt{PRIVEE} provides no protection because the perturbation is applied by the coordinator itself. Repeated queries on the same input reduce protection over time (Appendix~A gives an exact convergence rate and a query-budget bound for \texttt{PRIVEE-U}); although the theory bounds this degradation, we have not yet empirically validated the bound or extended it to \texttt{PRIVEE-U+}. Finally, our evaluation covers five datasets (up to 100 classes) and two attacks (GRNA and GIA); the client-count and $\rho$ ablation (Section~\ref{sec:results-analyses}) is demonstrated on MNIST/GRNA only, and generalization of the ``unaffected by client count'' claim to other datasets and attacks remains to be validated. \texttt{PRIVEE-U}'s protection for CIFAR-100 (100 classes) is also comparatively weaker at high attack strength than for lower-class-count datasets, since its shared perturbation scale has less room to act across $K$ narrow sub-intervals; \texttt{PRIVEE-U+} is recommended for such large-$K$ settings.

\paragraph{Future Work.} Future work will strengthen \texttt{PRIVEE} in three directions. We will investigate structured perturbation mechanisms that better preserve distances, inner products, and correlations while concealing confidence magnitudes. We will extend the framework to address repeated and adaptive queries through query-aware perturbation, privacy budgets, rate limiting, and stateful defenses, while relaxing the trusted-coordinator assumption by considering collusion and untrusted coordinators. Finally, we will evaluate \texttt{PRIVEE} on larger VFL systems, additional model architectures, more diverse attacks and datasets, and tasks requiring calibrated confidence scores.

\section*{Ethical Statement}

This work aims to improve privacy protection in vertical federated learning (VFL) by reducing information leakage from confidence scores during inference. The proposed framework is intended for legitimate applications involving privacy-sensitive data, including healthcare, finance, and IoT systems. Although our evaluation includes feature inference attacks such as GRNA and GIA, these attacks are studied solely to assess defensive effectiveness and strengthen the security of VFL systems. We do not introduce new attack techniques or provide tools intended to facilitate malicious exploitation. While \texttt{PRIVEE} substantially increases resistance to inference attacks, it should not be considered a complete replacement for complementary security mechanisms such as secure communication, authentication, access control, and rigorous system validation. Responsible deployment should combine \texttt{PRIVEE} with established security and privacy best practices.

\clearpage

\twocolumn[
\begin{center}
    {\LARGE\bfseries
    Supplementary Material for PRIVEE:\\[0.3em]
    Privacy-Preserving Vertical Federated Learning Against
    Feature Inference Attacks
    \par}
\end{center}

\vspace{1.5em}
]

\vspace{1em}

\section{Formal Privacy Guarantees}
\label{sec:formal-privacy-guarantees}

We establish the theoretical properties of \texttt{PRIVEE-U} in three parts: the extent to which the perturbed transformation preserves distance and correlation, the ambiguity guaranteed for any single released confidence vector, and an exact, closed-form characterization of what an adversary can and cannot recover under repeated queries on the same input.

\subsection{Proofs of Main-Text Lemmas}
\label{sec:main-text-proofs}

We restate and prove the two lemmas from the main paper's Order-Preserving Perturbation section that are used throughout the rest of this appendix.

\textbf{Lemma 1 (Distance- and Correlation-Preservation).} If $\mathbf{A} \in \mathbb{R}^{K \times K}$ is an orthonormal matrix, then the linear transformation $\mathbf{c} \mapsto \mathbf{A}\mathbf{c}$ is both distance-preserving and correlation-preserving.

\begin{proof}
Since $\mathbf{A}$ is orthonormal, we have $\mathbf{A}^\top \mathbf{A} = \mathbf{I}_K$. For any vectors $\mathbf{c}, \mathbf{c}' \in \mathbb{R}^K$,
$\lVert \mathbf{A}\mathbf{c} - \mathbf{A}\mathbf{c}' \rVert_2
= \lVert \mathbf{A}(\mathbf{c} - \mathbf{c}') \rVert_2
= \lVert \mathbf{c} - \mathbf{c}' \rVert_2,$
establishing distance preservation.
Similarly, the inner product is invariant under orthonormal transformation:
\begin{equation*}
\langle \mathbf{A}\mathbf{c}, \mathbf{A}\mathbf{c}' \rangle
= (\mathbf{A}\mathbf{c})^\top (\mathbf{A}\mathbf{c}')
= \mathbf{c}^\top \mathbf{A}^\top \mathbf{A} \mathbf{c}'
= \langle \mathbf{c}, \mathbf{c}' \rangle,
\end{equation*}
which implies $\mathrm{corr}(\mathbf{A}\mathbf{c}, \mathbf{A}\mathbf{c}') = \mathrm{corr}(\mathbf{c}, \mathbf{c}')$. Thus, the transformation preserves both distances and correlations.
\end{proof}

\textbf{Lemma 2 (Order-Preservation).} The transformation $c \mapsto A^{\mathrm{pert}}c$ is order-preserving with probability $1$ when $A = I_K$ or $A = I_K - \frac{2}{K}\mathbf{1}_K\mathbf{1}_K^{\top}$, where $A_{jj}^{\mathrm{pert}} = A_{jj} + u_j\sigma$, $u_j \sim \operatorname{Uniform}(I_{k_j})$, $k_j = K+1-\operatorname{rank}(c)_j$, and $\sigma$ is shared across all classes.

\begin{proof}
Write $A^{\mathrm{pert}} = A + \operatorname{diag}(u_1\sigma,\ldots,u_K\sigma)$. For $A = I_K - \frac{2}{K}\mathbf{1}_K\mathbf{1}_K^{\top}$, we have
$(Ax)_j = x_j - \frac{2}{K}\sum_{r=1}^{K}x_r$
for any $x \in \mathbb{R}^K$, while $(Ax)_j=x_j$ trivially when $A=I_K$. Hence,
$p_j = c_j(1+u_j\sigma)-\frac{2}{K}\sum_{r=1}^{K}c_r.$
Consider classes $i$ and $j$ such that $c_i>c_j>0$. By construction, $\operatorname{rank}(c)_i < \operatorname{rank}(c)_j$, so $k_i>k_j$. Since $I_{k_i}$ lies entirely above $I_{k_j}$, it follows that $u_i>u_j$ with probability $1$.

The term $\frac{2}{K}\sum_{r=1}^{K}c_r$ is identical for every class and therefore cancels in the pairwise difference:
$p_i-p_j = (c_i-c_j)+\sigma(c_i u_i-c_j u_j).$
The first term is positive by assumption. Moreover, $c_i>c_j>0$ and $u_i>u_j>0$ almost surely imply $c_i u_i>c_j u_j$. Therefore, both terms are positive, and hence $p_i>p_j$.
The same argument applies when $A=I_K$, except that there is no common additive term to cancel. Since the inequality holds for every pair satisfying $c_i>c_j$, we conclude that
$\operatorname{rank}\!\left(A^{\mathrm{pert}}c\right)=\operatorname{rank}(c)$.
\end{proof}

\textbf{Remark (\texttt{PRIVEE-U+} Order Preservation).} For \texttt{PRIVEE-U+}, where $\sigma$ is replaced by per-class parameters $\sigma_j = C/\rho_j$, order preservation holds with probability 1 under the additional condition that $\sigma_j$ is non-decreasing with $k_j$, i.e., $\sigma_i \geq \sigma_j$ whenever $c_i > c_j$. Under this condition, $u_i \sigma_i > u_j \sigma_j$ (since both $u_i > u_j$ and $\sigma_i \geq \sigma_j$), and the remainder of the proof of Lemma 2 above follows identically.

\subsection{Preservation Under Perturbation}

\textbf{Lemma 1} in the main document established that the unperturbed transformation \(\mathbf{c} \mapsto \mathbf{A}\mathbf{c}\) preserves distance and correlation exactly because \(\mathbf{A}\) is orthonormal. The perturbed matrix is \(\mathbf{A}^{\mathrm{pert}}=\mathbf{A}+\mathbf{E}\), where \(\mathbf{E}=\operatorname{diag}(u_1\sigma,\ldots,u_K\sigma)\). Because \(\mathbf{A}^{\mathrm{pert}}\) is not orthonormal in general, the exact preservation guarantee does not automatically transfer. We therefore quantify the resulting deviation.

\begin{lemma}
\label{lem:perturbation-operator-bound}
Let \(\mathbf{E}=\operatorname{diag}(u_1\sigma,\ldots,u_K\sigma)\), where \(u_j\in[0,1]\) for every \(j\), and define \(\varepsilon(\sigma)=2\sigma+\sigma^2\). Then
\(\|(\mathbf{A}^{\mathrm{pert}})^{\top}\mathbf{A}^{\mathrm{pert}}-\mathbf{I}_K\|_{\mathrm{op}}\leq\varepsilon(\sigma)\),
where \(\|\cdot\|_{\mathrm{op}}\) denotes the spectral operator norm. The bound holds with probability \(1\).
\end{lemma}

\begin{proof}
Because \(\mathbf{A}\) is symmetric and orthonormal, \(\mathbf{A}^{\top}=\mathbf{A}\) and \(\mathbf{A}^{\top}\mathbf{A}=\mathbf{I}_K\). Moreover, \(\mathbf{E}\) is diagonal and therefore symmetric. Consequently,
\begin{equation*}
\begin{aligned}
(\mathbf{A}^{\mathrm{pert}})^{\top}\mathbf{A}^{\mathrm{pert}}
&=(\mathbf{A}+\mathbf{E})^{\top}(\mathbf{A}+\mathbf{E}) \\
&=\mathbf{A}^{\top}\mathbf{A}
  +\mathbf{A}\mathbf{E}
  +\mathbf{E}\mathbf{A}
  +\mathbf{E}^2 \\
&=\mathbf{I}_K
  +\mathbf{A}\mathbf{E}
  +\mathbf{E}\mathbf{A}
  +\mathbf{E}^2.
\end{aligned}
\end{equation*}
By the triangle inequality and submultiplicativity of the operator norm,
\(\|\mathbf{A}\mathbf{E}+\mathbf{E}\mathbf{A}+\mathbf{E}^2\|_{\mathrm{op}}
\leq 2\|\mathbf{A}\|_{\mathrm{op}}\|\mathbf{E}\|_{\mathrm{op}}
+\|\mathbf{E}\|_{\mathrm{op}}^2\).
Since \(\mathbf{A}\) has eigenvalues \(\pm1\) (Lemma 1, main paper), \(\|\mathbf{A}\|_{\mathrm{op}}=1\). Moreover, since \(u_j\in[0,1]\), \(\|\mathbf{E}\|_{\mathrm{op}}=\sigma\max_j u_j\leq\sigma\). Hence,
\(\|\mathbf{A}\mathbf{E}+\mathbf{E}\mathbf{A}+\mathbf{E}^2\|_{\mathrm{op}}
\leq 2\sigma+\sigma^2=\varepsilon(\sigma)\).  Because \(u_j\in[0,1]\) is a hard constraint imposed by the sampling procedure, rather than merely a high-probability event, the bound holds with probability \(1\), not merely in expectation.
\end{proof}

\begin{theorem}[Bounded Distance Distortion]
\label{thm:bounded-distance-distortion}
Let \(\varepsilon(\sigma)<1\). For any \(\mathbf{c},\mathbf{c}'\in\mathbb{R}_{>0}^K\) and any realization of \(\mathbf{A}^{\mathrm{pert}}\),
\(\sqrt{1-\varepsilon(\sigma)}\,\|\mathbf{c}-\mathbf{c}'\|
\leq
\|\mathbf{A}^{\mathrm{pert}}\mathbf{c}
-\mathbf{A}^{\mathrm{pert}}\mathbf{c}'\|
\leq
\sqrt{1+\varepsilon(\sigma)}\,\|\mathbf{c}-\mathbf{c}'\|\).
\end{theorem}

\begin{proof}
Let \(\mathbf{x}=\mathbf{c}-\mathbf{c}'\). By Lemma~\ref{lem:perturbation-operator-bound},
\(\left|
\mathbf{x}^{\top}
\bigl((\mathbf{A}^{\mathrm{pert}})^{\top}
\mathbf{A}^{\mathrm{pert}}-\mathbf{I}_K\bigr)
\mathbf{x}
\right|
\leq
\varepsilon(\sigma)\|\mathbf{x}\|^2\).
It follows that
\(\left|
\|\mathbf{A}^{\mathrm{pert}}\mathbf{x}\|^2-\|\mathbf{x}\|^2
\right|
\leq
\varepsilon(\sigma)\|\mathbf{x}\|^2\).
Rearranging and taking square roots yields the stated bounds.
\end{proof}

\begin{corollary}[Bounded Correlation Distortion]
\label{cor:bounded-correlation-distortion}
For any \(\mathbf{c},\mathbf{c}'\in\mathbb{R}_{>0}^K\), if \(\varepsilon(\sigma)<1\), then
\(\left|
\operatorname{corr}(\mathbf{A}^{\mathrm{pert}}\mathbf{c},
\mathbf{A}^{\mathrm{pert}}\mathbf{c}')
-\operatorname{corr}(\mathbf{c},\mathbf{c}')
\right|
\leq
2\varepsilon(\sigma)/(1-\varepsilon(\sigma))\).
\end{corollary}

\begin{proof}
Expanding
\(\langle
\mathbf{A}^{\mathrm{pert}}\mathbf{c},
\mathbf{A}^{\mathrm{pert}}\mathbf{c}'
\rangle
=
\langle\mathbf{c},\mathbf{c}'\rangle
+
\mathbf{c}^{\top}
(\mathbf{A}\mathbf{E}+\mathbf{E}\mathbf{A}+\mathbf{E}^2)
\mathbf{c}'\)
and applying the Cauchy--Schwarz inequality bounds the deviation of the numerator from \(\langle\mathbf{c},\mathbf{c}'\rangle\) by
\(\varepsilon(\sigma)\|\mathbf{c}\|\|\mathbf{c}'\|\).
Theorem~\ref{thm:bounded-distance-distortion} bounds the corresponding denominator relative to
\(\|\mathbf{c}\|\|\mathbf{c}'\|\)
within the multiplicative interval
\([1-\varepsilon(\sigma),1+\varepsilon(\sigma)]\).
Combining these bounds using the quotient rule gives the stated result.
\end{proof}

\begin{remark}
As \(\sigma\to0\), equivalently \(\rho\to\infty\), \(\varepsilon(\sigma)\to0\), and both distortion bounds vanish, recovering the exact guarantee of Lemma 1 of the main paper. Distortion increases monotonically and smoothly with \(\sigma\), providing a fully quantified trade-off rather than a discontinuous loss of the preservation property. We empirically verified that this bound is nearly saturated across the tested range of \(\sigma\).
\end{remark}

\subsection{Single-Query Magnitude Ambiguity}

The adversary is assumed to know \(\mathbf{A}\), \(K\), and \(\sigma\). The adversary also knows that \(\mathbf{c}\) is a valid confidence vector and therefore satisfies \(\sum_{j=1}^{K}c_j=1\). Combined with Lemma 2 of the main paper, observing \(\mathbf{p}\) reveals \(\operatorname{rank}(\mathbf{c})\) exactly.

\begin{theorem}[Reconstruction Interval]
\label{thm:reconstruction-interval}
For each class \(j\), let \(k_j\) denote its recovered rank and let its corresponding sampling interval be \(I_{k_j}=[L_j,U_j]\). Then
\(c_j\in[\underline{c}_j,\overline{c}_j]\), where
\(\underline{c}_j=(p_j+2/K)/(1+\sigma U_j)\) and
\(\overline{c}_j=(p_j+2/K)/(1+\sigma L_j)\).
\end{theorem}

\begin{proof}
From the closed form established in the proof of Lemma 2,
\(p_j=c_j(1+u_j\sigma)-2/K\), and therefore
\(c_j=(p_j+2/K)/(1+u_j\sigma)\).
Because \(u_j\in[L_j,U_j]\) and \(c_j\) is strictly decreasing as a function of \(u_j\), substituting the two interval endpoints gives the stated lower and upper bounds.
\end{proof}

\begin{theorem}[Non-Degenerate Ambiguity]
\label{thm:nondegenerate-ambiguity}
Consider any two classes \(a\neq b\) whose draws \(u_a\) and \(u_b\) lie in the interiors of their respective sampling intervals, which occurs with probability \(1\). Then there exists \(\delta_0>0\) such that, for every \(\delta\in(-\delta_0,\delta_0)\), the vector \(\mathbf{c}'\) defined by
\(c'_a=c_a+\delta\), \(c'_b=c_b-\delta\), and \(c'_j=c_j\) for \(j\notin\{a,b\}\)
is a valid confidence vector that produces the same observed vector \(\mathbf{p}\). In particular, one may choose
\(\delta_0=\min\{
\overline{c}_a-c_a,\,
c_a-\underline{c}_a,\,
c_b-\underline{c}_b,\,
\overline{c}_b-c_b
\}\),
where the bounds are those given in Theorem~\ref{thm:reconstruction-interval}.
\end{theorem}

\begin{proof}
For class \(a\), preserving the observed value \(p_a\) requires
\(u'_a=[(p_a+2/K)/c'_a-1]/\sigma\in[L_a,U_a]\).
Solving this condition for \(\delta\) produces an interval containing \(0\) with positive radius because \(u_a\) lies in the interior of \([L_a,U_a]\). The same argument for class \(b\), using \(c'_b=c_b-\delta\), for class \(b\), using \(c'_b=c_b-\delta\), produces a second interval containing \(0\) with positive radius. Their intersection therefore contains a nonempty open interval around \(0\). The perturbation preserves \(\sum_j c'_j=1\), and choosing \(|\delta|<\delta_0\) preserves feasibility of both modified coordinates.
\end{proof}

\begin{remark}
Theorem~\ref{thm:nondegenerate-ambiguity} establishes that \(\mathbf{p}\) is consistent with an uncountable continuum of distinct valid confidence vectors, rather than merely providing an interval bound that happens to contain the true \(\mathbf{c}\). The interval width in Theorem~\ref{thm:reconstruction-interval} scales with \(\sigma=C/\rho\). Consequently, smaller values of \(\rho\), and hence larger values of \(\sigma\), directly increase single-query ambiguity.
\end{remark}

\subsection{Exact Characterization Under Repeated Queries}
In \texttt{PRIVEE-U}, Because \(u_j\) is drawn from a fully public and bounded distribution, its extreme values become recoverable in the limit. We characterize this behavior precisely because an exact convergence rate enables a principled and provable query-budget mitigation.

For a class \(j\) with known interval floor \(L_j\), define the estimator after \(T\) independent queries on the same confidence vector by
\(\widehat{c}_j^{(T)}
=
\bigl(\min_{1\leq t\leq T}p_j^{(t)}+2/K\bigr)/(1+\sigma L_j)\).

\begin{theorem}[Exact Convergence Rate]
\label{thm:exact-convergence-rate}
Let
\(\kappa_j=K(1+\sigma L_j)/(c_j\sigma)\).
For every \(\delta'\in[0,1/\kappa_j]\),
\begin{equation}
\Pr\!\left[
\widehat{c}_j^{(T)}-c_j<\delta'
\right]
=
1-\left(1-\kappa_j\delta'\right)^T.
\label{eq:exact-convergence-rate}
\end{equation}
\end{theorem}

\begin{proof}
From the closed form in Lemma~0.2,
\(p_j^{(t)}=c_j(1+u_j^{(t)}\sigma)-2/K\).
Let \(m_T=\min_{1\leq t\leq T}u_j^{(t)}\). Substitution into the estimator gives
\(\widehat{c}_j^{(T)}
=
c_j(1+\sigma m_T)/(1+\sigma L_j)\),
and hence
\(\widehat{c}_j^{(T)}-c_j
=
c_j\sigma(m_T-L_j)/(1+\sigma L_j)\).
This quantity is strictly increasing in \(m_T\).

Each \(u_j^{(t)}\) is independently distributed as
\(\operatorname{Uniform}(L_j,U_j)\), where \(U_j-L_j=1/K\). Therefore,
\(\Pr[u_j^{(t)}<L_j+x]=Kx\) for \(x\in[0,1/K]\), and
\(\Pr[m_T<L_j+x]=1-(1-Kx)^T\).
Setting
\(x=\delta'(1+\sigma L_j)/(c_j\sigma)\)
gives \(Kx=\kappa_j\delta'\). Substitution yields
Equation~\eqref{eq:exact-convergence-rate}.
\end{proof}

We verified this closed form against simulation (Section \ref{sec:repeated-query-experiment}) for multiple values of \(T\) and \(\delta'\). For example, when \(T=1000\), the predicted and empirical probabilities agreed within simulation noise at every tested tolerance. 

\begin{corollary}[Query-Budget Guarantee]
\label{cor:query-budget-guarantee}
To guarantee
\(\Pr[\widehat{c}_j^{(T)}-c_j<\delta']\leq p_{\max}\)
for a target tolerance \(\delta'\), the number of repeated queries permitted for the same input must satisfy
\(T\leq
\log(1-p_{\max})/
\log(1-\kappa_j\delta')\).
For an integer-valued query budget, the right-hand side may be replaced by its floor.
\end{corollary}

\begin{remark}
The same coordinate-wise argument applies across all \(K\) classes using the same batch of repeated queries because each class-specific draw \(u_j\) is sampled independently during every query. Empirically, full-vector recovery converges at approximately the rate of the hardest coordinate. A structurally similar but looser bound can be derived for a sample-mean estimator by replacing the interval endpoint \(L_j\) with \(\mathbb{E}[u_j]\). The minimum-based estimator converges at rate \(O(1/T)\), whereas the sample-mean estimator converges at rate \(O(1/\sqrt{T})\). The minimum-based estimator is therefore the binding case for Corollary~\ref{cor:query-budget-guarantee}.
\end{remark}

\subsection{Limitations and Practical Considerations}
\label{sec:limitations-practical}

\texttt{PRIVEE-U} preserves class ranking and bounds distance/correlation distortion (Lemma~\ref{lem:perturbation-operator-bound}, Theorem~\ref{thm:bounded-distance-distortion}), but its guarantee is an ambiguity/reconstruction-hardness bound, not a formal $(\varepsilon,\delta)$-differential-privacy guarantee: the perturbation $u_j\sigma$ is drawn from a bounded, rank-dependent (non-symmetric) interval rather than a calibrated DP noise distribution, so the standard DP composition and post-processing theorems do not apply directly. \texttt{PRIVEE-U} should therefore be treated as a pragmatic, inference-time defense against feature-inference attacks rather than a substitute for DP in settings that require a provable guarantee.

Regarding repeated queries, Theorem~\ref{thm:exact-convergence-rate} and Corollary~\ref{cor:query-budget-guarantee} give an exact convergence rate and a derived query-budget bound: an adversary who queries the same input $T$ times can drive a minimum-based estimator's error below any target tolerance $\delta'$ with probability approaching $1$ as $T$ grows, at a rate governed by $\kappa_j=K(1+\sigma L_j)/(c_j\sigma)$. This means protection degrades under sustained querying of a fixed input, and a deployment should impose a query budget consistent with Corollary~\ref{cor:query-budget-guarantee} for its chosen $\rho$. 
While the main text notes that empirical validation of this bound remains for future work, we take a first step here by evaluating it against simulated repeated-query attacks on fixed MNIST and CIFAR-10 inputs (Section~\ref{sec:repeated-query-experiment}); a full empirical validation across additional datasets, and extending the analysis to \texttt{PRIVEE-U+}, remain for future work.

\texttt{PRIVEE-U} also assumes an honest, non-colluding coordinator that applies the perturbation faithfully before release (main paper, Section~3); a compromised or colluding coordinator can simply release unperturbed scores, in which case \texttt{PRIVEE} provides no protection. This is a deployment-trust assumption shared with most inference-time defenses that rely on a mediating party, and is not addressed by the theoretical results above.

\texttt{PRIVEE-U}'s shared perturbation scale $\sigma$ also becomes less effective as $K$ grows because the $K$ equal-width sub-intervals from which each $u_j$ is drawn narrow proportionally to $1/K$ (Algorithm 1, main paper), leaving less room to perturb individual classes. This is evident in the CIFAR-100 ($K=100$) results under high attack strength (Table~\ref{tab:grna75}; see also the main paper's Results \& Analyses). \texttt{PRIVEE-U+}'s per-class scales $\sigma_j$ avoid this limitation and are recommended for large-$K$ deployments.

\begin{table*}[t]
\centering
\footnotesize
\setlength{\tabcolsep}{5pt}
\renewcommand{\arraystretch}{1.15}
\caption{MSE of the Gradient Inversion Attack (GIA) under different defense mechanisms with attack strength $50\%$. Higher MSE indicates stronger resistance to gradient inversion.}
\label{tab:mse_50}

\begin{tabular}{lrrrrrrrr}
\hline
\textbf{Dataset} &
\textbf{No Defense} &
\textbf{R(1)} &
\textbf{R(2)} &
\textbf{OPE} &
\textbf{DP ($\varepsilon=0.5$)} &
\textbf{DP ($\varepsilon=0.7$)} &
\textbf{\texttt{PRIVEE-U}} &
\textbf{\texttt{PRIVEE-U+}} \\
\hline
MNIST
& 0.0340 & 0.1158 & 0.1127 & 0.3342 & 0.6363 & 0.4811 & 22.1090 & \textbf{27.7500} \\

CIFAR-100
& 0.2600 & 0.1241 & 0.1270 & 0.3580 & 0.5469 & 0.4224 & \textbf{21.9023} & 21.3105 \\

CIFAR-10
& 0.2400 & 0.1255 & 0.1235 & 0.3911 & 0.6510 & 0.5531 & \textbf{22.5932} & 18.2900 \\
\hline
\end{tabular}
\end{table*}

\begin{table*}[t]
\centering
\footnotesize
\setlength{\tabcolsep}{5pt}
\renewcommand{\arraystretch}{1.15}
\caption{MSE of the Gradient Inversion Attack (GIA) under different defense mechanisms with attack strength $75\%$. Higher MSE indicates stronger resistance to gradient inversion.}
\label{tab:mse_75}

\begin{tabular}{lrrrrrrrr}
\hline
\textbf{Dataset} &
\textbf{No Defense} &
\textbf{R(1)} &
\textbf{R(2)} &
\textbf{OPE} &
\textbf{DP ($\varepsilon=0.5$)} &
\textbf{DP ($\varepsilon=0.7$)} &
\textbf{\texttt{PRIVEE-U}} &
\textbf{\texttt{PRIVEE-U+}} \\
\hline
MNIST
& 0.1839 & 0.1567 & 0.1584 & 0.3555 & 0.6891 & 0.5408 & 24.7905 & \textbf{24.9554} \\

CIFAR-100
& 0.4210 & 0.1343 & 0.1150 & 0.3109 & 0.4950 & 0.4141 & \textbf{23.6234} & 20.9240 \\

CIFAR-10
& 0.3423 & 0.1330 & 0.1431 & 0.3722 & 0.7323 & 0.5966 & \textbf{41.8012} & 19.4980 \\
\hline
\end{tabular}
\end{table*}

\begin{table*}[t]
\centering
\footnotesize
\setlength{\tabcolsep}{5pt}
\renewcommand{\arraystretch}{1.15}
\caption{MSE of the Gradient Inversion Attack (GIA) under different defense mechanisms with attack strength $25\%$. Higher MSE indicates stronger resistance to gradient inversion.}
\label{tab:mse_25}

\begin{tabular}{lrrrrrrrr}
\hline
\textbf{Dataset} &
\textbf{No Defense} &
\textbf{R(1)} &
\textbf{R(2)} &
\textbf{OPE} &
\textbf{DP ($\varepsilon=0.5$)} &
\textbf{DP ($\varepsilon=0.7$)} &
\textbf{\texttt{PRIVEE-U}} &
\textbf{\texttt{PRIVEE-U+}} \\
\hline
MNIST
& 0.0180 & 0.0681 & 0.0649 & 0.2988 & 0.5891 & 0.4531 & 20.5962 & \textbf{26.0250} \\

CIFAR-100
& 0.1400 & 0.1189 & 0.1026 & 0.3202 & 0.5407 & 0.4666 & \textbf{23.3077} & 21.0500 \\

CIFAR-10
& 0.1350 & 0.1177 & 0.0960 & 0.4119 & 0.6100 & 0.5158 & \textbf{40.2123} & 17.8243 \\
\hline
\end{tabular}
\end{table*}

\subsubsection{Computational Complexity Analysis}

The computational cost of \texttt{PRIVEE-U} and \texttt{PRIVEE-U+} is efficient, scaling linearly with the number of classes $K$. Specifically:
\begin{itemize}
    \item Applying the transformation matrix $\mathbf{A}$ to the confidence vector $\mathbf{c}$ has a time complexity of $O(K)$, owing to the structured or diagonal nature of $\mathbf{A}$.
    \item Incorporating the rank-aware diagonal perturbations (i.e., computing $\mathbf{A}^{\text{pert}}$) also requires $O(K)$ time, as only diagonal entries are modified.
    \item Notably, matrix $\mathbf{A}$ is fixed and constructed once, allowing reuse across all confidence vectors without additional cost.
\end{itemize}

Thus, the overall per-vector complexity is $O(K)$, making \texttt{PRIVEE-U} and \texttt{PRIVEE-U+} suitable for real-time and large-scale inference.

\section{Evaluated Feature-Inference Attacks}

We evaluate \texttt{PRIVEE} against the feature-inference attacks of \citet{jiang2022comprehensive}. Table~\ref{tab:evaluated-attacks} summarizes their objectives and mechanisms. We exclude the Equation-Solving attack because it requires access to the original (unperturbed) confidence scores, whereas \texttt{PRIVEE} releases only perturbed scores during inference.
\begin{table}[h]
\centering
\footnotesize
\setlength{\tabcolsep}{3pt}
\renewcommand{\arraystretch}{1.1}
\caption{Feature-inference attacks evaluated in this work.}
\label{tab:evaluated-attacks}
\begin{tabular}{p{0.20\columnwidth}p{0.70\columnwidth}}
\hline
\textbf{Attack} & \textbf{Description} \\
\hline
\textbf{GRNA} &
Uses a conditional generative regression network to reconstruct passive features from active features and observed confidence scores. \\

\textbf{GIA} &
Optimizes reconstructed features by matching the confidence scores produced by the VFL model. \\

\textbf{Equation-Solving} &
Recovers unknown features from log-probability equations using unperturbed confidence scores. Excluded because \texttt{PRIVEE} never releases unperturbed confidence values. \\
\hline
\end{tabular}
\end{table}

\section{Additional Experimental Results}

\subsection{GRN Attack (75\% Attack Strength)}

Table~\ref{tab:grna75} reports the MSE of the GRN attack across datasets at 75\% attack strength, complementing the 25\% and 50\% results reported in the main paper (Tables 4 and 5). The pattern is broadly consistent with the lower attack strengths: \texttt{PRIVEE-U} and \texttt{PRIVEE-U+} substantially outperform all other defenses on every dataset, including CIFAR-100, where \texttt{PRIVEE-U} ($1.8043$) still exceeds both DP variants ($0.9273$ and $0.4761$). CIFAR-100 is nonetheless the one dataset where \texttt{PRIVEE-U+} ($3.1780$) is needed to substantially widen the margin over \texttt{PRIVEE-U}: at $K=100$, \texttt{PRIVEE-U}'s MSE is an order of magnitude lower than its own MSE on every other dataset at this attack strength (5.84--18.28), consistent with the large-$K$ discussion in the main paper's Results \& Analyses section.

\begin{table*}[t]
\centering
\footnotesize
\setlength{\tabcolsep}{5pt}
\renewcommand{\arraystretch}{1.15}
\caption{MSE of the GRN attack under different defense mechanisms across datasets with attack strength $75\%$. Higher MSE indicates stronger resistance to gradient reconstruction attacks.}
\label{tab:grna75}

\begin{tabular}{lrrrrrrrr}
\hline
\textbf{Dataset} &
\textbf{No Defense} &
\textbf{R(1)} &
\textbf{R(2)} &
\textbf{OPE} &
\textbf{DP ($\varepsilon=0.5$)} &
\textbf{DP ($\varepsilon=0.7$)} &
\textbf{\texttt{PRIVEE-U}} &
\textbf{\texttt{PRIVEE-U+}} \\
\hline
MNIST
& 0.1434 & 0.1438 & 0.1435 & 0.3404 & 0.9441 & 0.5970 & \textbf{17.5048} & 2.6346 \\

CIFAR100
& 0.3700 & 0.0129 & 0.0127 & 0.2894 & 0.9273 & 0.4761 & 1.8043 & \textbf{3.1780} \\

CIFAR10
& 0.3100 & 0.1291 & 0.1213 & 0.3586 & 1.0583 & 0.5744 & \textbf{7.0418} & 3.6539 \\

Drive Diagnosis
& 0.2219 & 0.1233 & 0.1212 & 0.3356 & 1.0551 & 0.5770 & \textbf{18.2788} & 2.4318 \\

Adult Income
& 0.3511 & 0.3572 & 0.3045 & 0.2936 & 1.1323 & 0.7071 & \textbf{5.8399} & 3.8318 \\
\hline
\end{tabular}
\end{table*}

\subsection{\bf Gradient Inversion Attack}
Tables~\ref{tab:mse_50}, \ref{tab:mse_75}, and \ref{tab:mse_25} report the mean squared error (MSE) between reconstructed and original confidence scores under Gradient Inversion Attack (GIA) across datasets and attack strengths. As with GRN, the \textit{No Defense} setting consistently yields very low MSEs, indicating near-perfect recovery of passive-party data.

Applying simple rounding increases reconstruction error by roughly 3–4$\times$, while OPE leads to even higher errors in the 0.30–0.40 range. Adding DP noise further raises the MSE into the 0.42–0.73 range. However, only \texttt{PRIVEE-U} and \texttt{PRIVEE-U+} elevate the error into the tens. For example, at 50\% attack strength, MSE on MNIST reaches 22.11 with \texttt{PRIVEE-U} and 27.75 with \texttt{PRIVEE-U+}; on CIFAR-10, the scores are 22.59 and 18.29, respectively.

This trend holds across other attack strengths: even at 25\%, \texttt{PRIVEE-U} maintains MSE above 20, while all other defenses stay below 1. These results show that while rounding and OPE offer limited protection and standard DP adds moderate noise, only \texttt{PRIVEE-U} and \texttt{PRIVEE-U+} effectively defend against gradient inversion, increasing reconstruction error by up to two orders of magnitude.
\begin{table*}[t]
\centering
\footnotesize
\setlength{\tabcolsep}{7pt}
\renewcommand{\arraystretch}{1.1}
\caption{Baseline training and inference accuracy of the VFL models before applying attacks or defenses.}
\label{tab:accuracy_baseline}

\begin{tabular}{lcrrr}
\hline
\textbf{Dataset} &
\shortstack{\textbf{\#}\\\textbf{Classes}} &
\textbf{Model} &
\shortstack{\textbf{Train}\\\textbf{Acc. (\%)}} &
\shortstack{\textbf{Infer.}\\\textbf{Acc. (\%)}} \\
\hline
MNIST            & 10  & NN      & 98.34 & 97.64 \\
CIFAR-100        & 100 & ResNet  & 71.24 & 58.98 \\
CIFAR-10         & 10  & ResNet  & 75.59 & 63.21 \\
Drive Diagnosis  & 11  & NN, LR  & 79.06 & 80.90 \\
Adult Income     & 2   & NN, LR  & 74.65 & 74.27 \\
\hline
\end{tabular}
\end{table*}
\subsection{Training Accuracy of VFL Models}
Table~\ref{tab:accuracy_baseline} presents the training and inference accuracies achieved by our 2-party VFL model on each dataset. The baseline model consistently attained high accuracy across all datasets before the introduction of any adversarial attacks or defense mechanisms.

\subsection{Empirical Validation of the Repeated-Query Bound}
\label{sec:repeated-query-experiment}

The following analysis shows that
PRIVEE-U does not provide asymptotic protection when an adversary
can submit the same input repeatedly. For a fixed input, the original
confidence vector
\begin{equation*}
\mathbf{c}=(c_1,\ldots,c_K)^\top
\end{equation*}
remains unchanged across queries, whereas PRIVEE-U independently
resamples the rank-dependent perturbation variables
\begin{equation*}
u_j^{(t)} \sim \operatorname{Uniform}(L_j,U_j),
\qquad
U_j-L_j=\frac{1}{K},
\end{equation*}
for every coordinate \(j\) and query \(t\).

Because PRIVEE-U preserves the class ranking, the adversary can
identify the rank-dependent interval \([L_j,U_j]\) associated with
each confidence coordinate. The adversary is also assumed to know
the public defense parameters \(K\), \(A\), and
\(\sigma=C/\rho\). For
\begin{equation*}
A=I_K-\frac{2}{K}\mathbf{1}_K\mathbf{1}_K^\top
\end{equation*}
and a valid confidence vector satisfying
\(\sum_{j=1}^{K}c_j=1\), the released value for coordinate \(j\)
during query \(t\) is
\begin{equation*}
p_j^{(t)}
=
c_j\left(1+\sigma u_j^{(t)}\right)-\frac{2}{K}.
\end{equation*}

Since \(p_j^{(t)}\) is increasing in \(u_j^{(t)}\), the smallest
released value observed after \(T\) repeated queries corresponds to
the smallest sampled perturbation. Define
\begin{equation*}
m_{j,T}
=
\min_{1\leq t\leq T}u_j^{(t)}.
\end{equation*}
The adversary then applies the minimum-based estimator
\begin{equation*}
\widehat{c}_j^{(T)}
=
\frac{
\min_{1\leq t\leq T}p_j^{(t)}+2/K
}{
1+\sigma L_j
}.
\end{equation*}

Substituting the expression for the minimum released value gives
\begin{equation*}
\widehat{c}_j^{(T)}-c_j
=
\frac{
c_j\sigma
}{
1+\sigma L_j
}
\left(m_{j,T}-L_j\right).
\end{equation*}
Thus, the reconstruction error is determined exactly by how close
the smallest sampled perturbation is to the known lower endpoint
\(L_j\). Since the minimum of \(T\) independent uniform samples
satisfies
\begin{equation*}
\mathbb{E}[m_{j,T}-L_j]
=
\frac{1}{K(T+1)},
\end{equation*}
the expected absolute reconstruction error for coordinate \(j\) is
\begin{equation*}
\mathbb{E}
\left[
\left|
\widehat{c}_j^{(T)}-c_j
\right|
\right]
=
\frac{
c_j\sigma
}{
K(1+\sigma L_j)(T+1)
}.
\end{equation*}

For one fixed confidence vector, we measure reconstruction using the
mean absolute per-coordinate error
\begin{equation*}
\operatorname{MAE}^{(T)}(\mathbf{c})
=
\frac{1}{K}
\sum_{j=1}^{K}
\left|
\widehat{c}_j^{(T)}-c_j
\right|.
\end{equation*}
Its theoretical expectation is
\begin{equation*}
\mathbb{E}
\left[
\operatorname{MAE}^{(T)}(\mathbf{c})
\right]
=
\frac{\sigma}{K^2(T+1)}
\sum_{j=1}^{K}
\frac{c_j}{1+\sigma L_j}.
\end{equation*}
The expected error therefore decreases at rate
\(\mathcal{O}(1/T)\).

\paragraph{Experiment Details.}
For each dataset, we select one fixed test input and the same confidence vector is
then protected repeatedly using fresh, independent PRIVEE-U
perturbations. We evaluate
\begin{equation*}
T\in\{1,2,5,10,20,50,100,200,500,1000\}
\end{equation*}
and
\begin{equation*}
\rho\in\{0.1,0.3,0.5,0.7\},
\qquad
\sigma=\frac{0.485}{\rho}.
\end{equation*}
For each combination of \(T\) and \(\rho\), the repeated-query
simulation is independently replicated 20 times. The empirical
curve reports the mean value of
\(\operatorname{MAE}^{(T)}(\mathbf{c})\) across these independent
replications, and the shaded region reports the corresponding
\(95\%\) confidence interval. The theoretical curve reports
\(\mathbb{E}[\operatorname{MAE}^{(T)}(\mathbf{c})]\) for the same
fixed confidence vector.

\subsubsection{Results}

Figure~\ref{fig:repeated-query-mnist-cifar} compares the empirical
and theoretical reconstruction errors for an MNIST confidence
vector and a CIFAR-10 confidence vector. In both cases, the
empirical error closely follows the exact theoretical expectation and
decreases approximately linearly with slope \(-1\) on the log-log
scale, consistent with the predicted \(\mathcal{O}(1/T)\) convergence
rate. Smaller values of \(\rho\) produce larger reconstruction errors
at every finite query count because they correspond to larger
perturbation scales \(\sigma=C/\rho\). Nevertheless, the error
approaches zero for every evaluated value of \(\rho\) as the number
of repeated queries increases.

The empirical curves do not coincide perfectly with the theoretical
curves because the empirical values are obtained from a finite number
of independent perturbation replications, whereas the theoretical
curves represent exact expectations over the perturbation
distribution. The observed agreement supports the repeated-query
analysis for the evaluated fixed inputs. These results also confirm
that PRIVEE-U should be deployed with repeated-input detection and a
query budget, rather than allowing unlimited repeated evaluations of
the same input.

\begin{figure*}[t]
\centering

\begin{minipage}[t]{0.49\textwidth}
    \centering
    \includegraphics[width=\linewidth]{./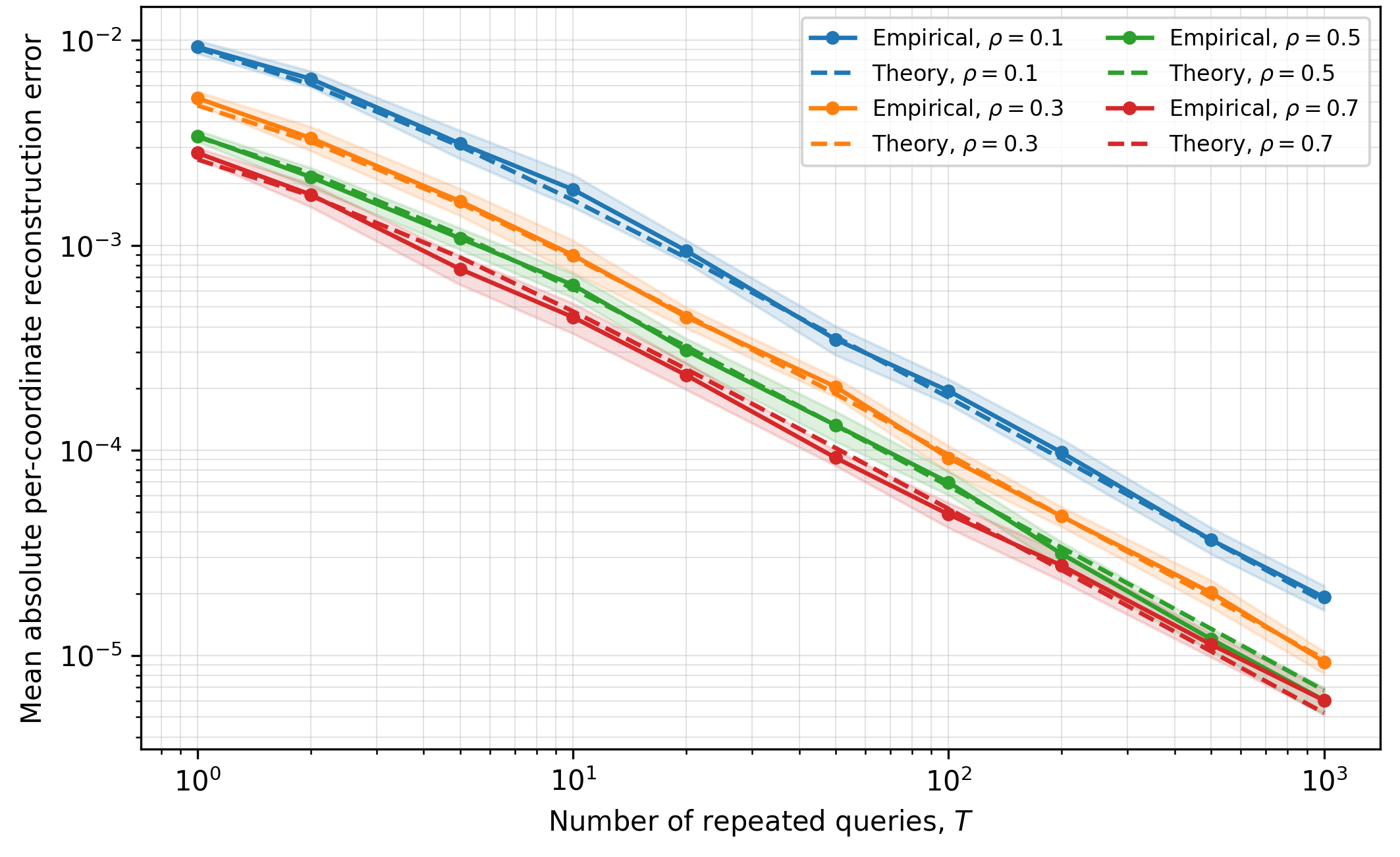}
\end{minipage}\hfill
\begin{minipage}[t]{0.49\textwidth}
    \centering
    \includegraphics[width=\linewidth]{./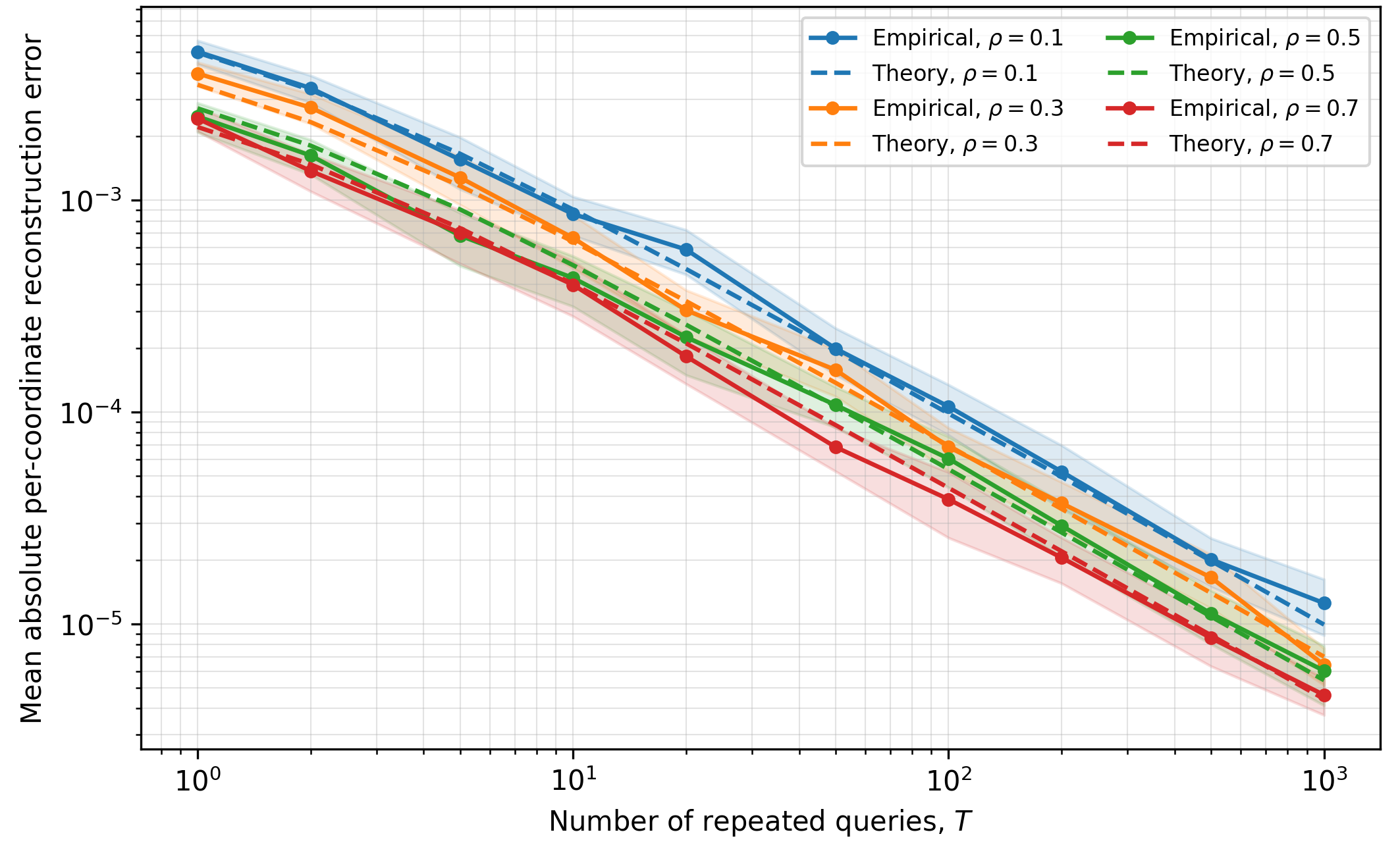}
\end{minipage}

\caption{
Single-input repeated-query reconstruction under PRIVEE-U on MNIST (left) and CIFAR-10 (right). For each dataset, one fixed confidence vector is perturbed repeatedly using fresh independent rank-dependent draws. The solid curves show the empirical mean absolute per-coordinate reconstruction error across independent repeated-query simulations, and the shaded regions show \(95\%\) confidence intervals. The dashed curves show the exact theoretical expectation for the same fixed confidence vector. In both datasets, the reconstruction error decreases approximately as \(\mathcal{O}(1/T)\), consistent with Theorem~\ref{thm:exact-convergence-rate}. Smaller values of \(\rho\), corresponding to larger perturbation scales \(\sigma=C/\rho\), produce larger finite-query reconstruction errors but do not prevent convergence under sustained repeated querying.
}
\label{fig:repeated-query-mnist-cifar}
\end{figure*}

\subsubsection{Numerical Example for the Repeated Query Analysis}

Consider a two-class confidence vector
\begin{equation*}
\mathbf{c} =
\begin{bmatrix}
0.2 \\
0.8
\end{bmatrix}.
\end{equation*}

The scores are already sorted from smallest to largest. Let
\begin{equation*}
K=2,
\qquad
\sigma=1,
\qquad
T=3.
\end{equation*}

The rank-dependent perturbation intervals are
\begin{equation*}
L =
\begin{bmatrix}
0 \\
0.5
\end{bmatrix},
\qquad
U =
\begin{bmatrix}
0.5 \\
1
\end{bmatrix}.
\end{equation*}

Thus, the random perturbation variables satisfy
\begin{equation*}
u_1^{(t)} \sim \operatorname{Uniform}(0,0.5),
\qquad
u_2^{(t)} \sim \operatorname{Uniform}(0.5,1).
\end{equation*}

Suppose that the following random values are generated during
three repeated queries:
\begin{equation*}
\begin{array}{c|cc}
\text{Query } t & u_1^{(t)} & u_2^{(t)} \\
\hline
1 & 0.40 & 0.90 \\
2 & 0.10 & 0.70 \\
3 & 0.30 & 0.80
\end{array}
\end{equation*}

The minimum values observed across the three queries are
\begin{equation*}
m_{1,3}
=
\min_{1\leq t\leq 3}u_1^{(t)}
=
0.10,
\end{equation*}
and
\begin{equation*}
m_{2,3}
=
\min_{1\leq t\leq 3}u_2^{(t)}
=
0.70.
\end{equation*}

These minimum values are not equal to the corresponding lower
interval endpoints:
\begin{equation*}
L_1=0,
\qquad
L_2=0.5.
\end{equation*}

\paragraph{Empirical Reconstruction Error.}

For each coordinate \(j\), the perturbed score released during
query \(t\) is
\begin{equation*}
p_j^{(t)}
=
c_j\left(1+\sigma u_j^{(t)}\right)
-
\frac{2}{K}.
\end{equation*}

Because \(K=2\),
\begin{equation*}
\frac{2}{K}=1.
\end{equation*}

For the first coordinate, \(c_1=0.2\). The three perturbed
releases are
\begin{equation*}
p_1^{(1)}
=
0.2(1+0.40)-1
=
-0.72,
\end{equation*}
\begin{equation*}
p_1^{(2)}
=
0.2(1+0.10)-1
=
-0.78,
\end{equation*}
and
\begin{equation*}
p_1^{(3)}
=
0.2(1+0.30)-1
=
-0.74.
\end{equation*}

Therefore,
\begin{equation*}
\min_{1\leq t\leq 3}p_1^{(t)}
=
-0.78.
\end{equation*}

Using the known lower endpoint \(L_1=0\), the attacker estimates
the original confidence score as
\begin{equation*}
\widehat{c}_1^{(3)}
=
\frac{
\min_{1\leq t\leq 3}p_1^{(t)}+2/K
}{
1+\sigma L_1
}.
\end{equation*}

Substituting the numerical values gives
\begin{equation*}
\widehat{c}_1^{(3)}
=
\frac{-0.78+1}{1+1(0)}
=
0.22.
\end{equation*}

The corresponding absolute reconstruction error is
\begin{equation*}
\left|
\widehat{c}_1^{(3)}-c_1
\right|
=
|0.22-0.20|
=
0.02.
\end{equation*}

For the second coordinate, \(c_2=0.8\). The three perturbed
releases are
\begin{equation*}
p_2^{(1)}
=
0.8(1+0.90)-1
=
0.52,
\end{equation*}
\begin{equation*}
p_2^{(2)}
=
0.8(1+0.70)-1
=
0.36,
\end{equation*}
and
\begin{equation*}
p_2^{(3)}
=
0.8(1+0.80)-1
=
0.44.
\end{equation*}

Therefore,
\begin{equation*}
\min_{1\leq t\leq 3}p_2^{(t)}
=
0.36.
\end{equation*}

Using \(L_2=0.5\), the reconstructed confidence score is
\begin{equation*}
\widehat{c}_2^{(3)}
=
\frac{0.36+1}{1+1(0.5)}
=
\frac{1.36}{1.5}
\approx
0.9067.
\end{equation*}

The corresponding absolute reconstruction error is
\begin{equation*}
\left|
\widehat{c}_2^{(3)}-c_2
\right|
=
|0.9067-0.8|
=
0.1067.
\end{equation*}

The empirical mean absolute per-coordinate reconstruction error is
therefore
\begin{equation*}
\frac{
\left|\widehat{c}_1^{(3)}-c_1\right|
+
\left|\widehat{c}_2^{(3)}-c_2\right|
}{2}
=
\frac{0.02+0.1067}{2}
\approx
0.0633.
\end{equation*}

\paragraph{Theoretical Expected Reconstruction Error.}

The theoretical expected absolute reconstruction error for
coordinate \(j\) after \(T\) repeated queries is
\begin{equation*}
\mathbb{E}
\left[
\left|
\widehat{c}_j^{(T)}-c_j
\right|
\right]
=
\frac{
c_j\sigma
}{
K(1+\sigma L_j)(T+1)
}.
\end{equation*}

For the first coordinate,
\begin{equation*}
c_1=0.2,
\qquad
L_1=0.
\end{equation*}

Thus,
\begin{equation*}
\mathbb{E}
\left[
\left|
\widehat{c}_1^{(3)}-c_1
\right|
\right]
=
\frac{
0.2(1)
}{
2(1+1(0))(3+1)
}
=
\frac{0.2}{8}
=
0.025.
\end{equation*}

For the second coordinate,
\begin{equation*}
c_2=0.8,
\qquad
L_2=0.5.
\end{equation*}

Thus,
\begin{equation*}
\mathbb{E}
\left[
\left|
\widehat{c}_2^{(3)}-c_2
\right|
\right]
=
\frac{
0.8(1)
}{
2(1+1(0.5))(3+1)
}
=
\frac{0.8}{12}
\approx
0.0667.
\end{equation*}

The theoretical expected mean absolute per-coordinate error is
therefore
\begin{equation*}
\frac{0.025+0.0667}{2}
\approx
0.0458.
\end{equation*}

\subsection{Client-Count and $\rho$ Ablation for Other Datasets}
\label{sec:other-datasets-ablation}

The main text notes that generalizing the ``unaffected by client
count'' finding beyond the tested MNIST/GRNA setting remains to be
validated. As a first step in that direction, we examine here
how PRIVEE-U behaves on two additional datasets: CIFAR-10 as the
perturbation-control parameter \(\rho\) varies, and DRIVE as the number
of clients varies. Figure~\ref{fig:other-datasets-ablation} reports the
corresponding results. In both cases, higher MSE indicates
stronger resistance to feature reconstruction. A full replication
across both dimensions on additional datasets and attacks remains for
future work.

\begin{figure*}[t]
    \centering
    \begin{minipage}[t]{0.45\textwidth}
        \centering
        \includegraphics[width=\linewidth]{./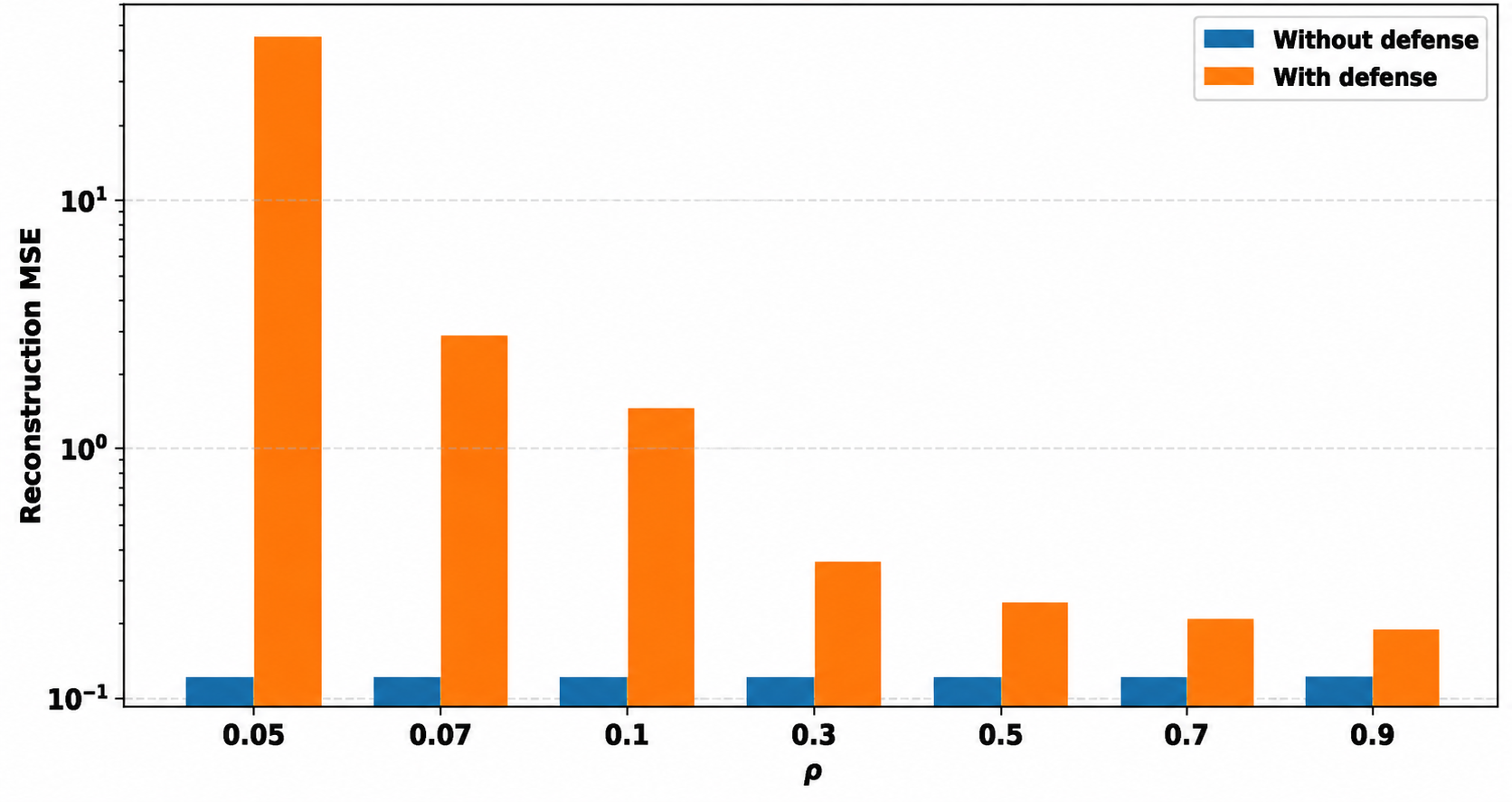}
    \end{minipage}\hfill
    \begin{minipage}[t]{0.45\textwidth}
        \centering
        \includegraphics[width=\linewidth]{./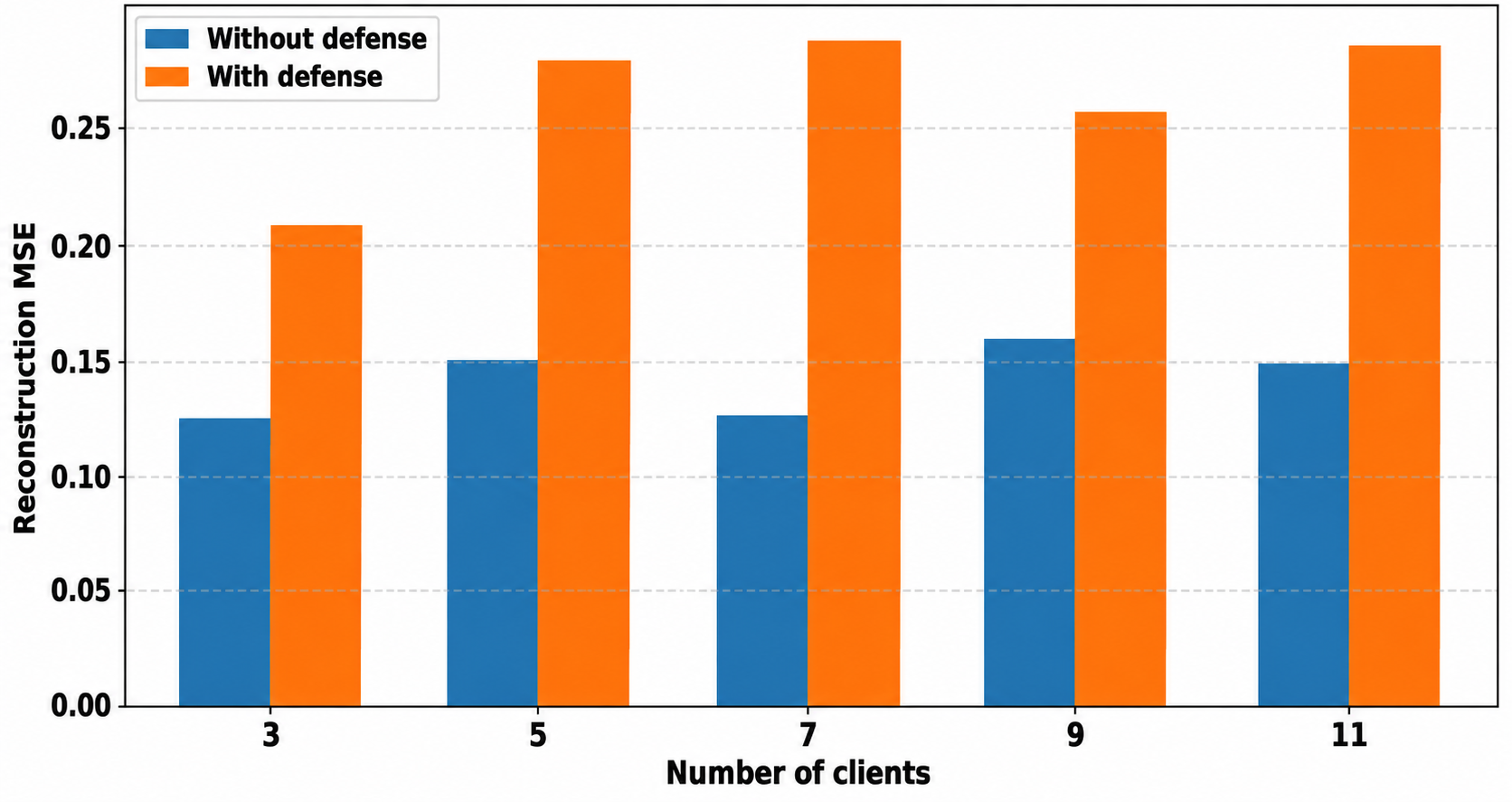}
    \end{minipage}
    \caption{
    Client-count and \(\rho\) ablation on additional datasets.
    \textbf{Left:} Reconstruction MSE on CIFAR-10 with and without
    \texttt{PRIVEE-U} as a function of \(\rho\). The undefended MSE remains
    nearly constant across \(\rho\), whereas the defended MSE is
    substantially higher for every tested setting and decreases as
    \(\rho\) increases, consistent with the perturbation scale
    \(\sigma=C/\rho\).
    \textbf{Right:} Reconstruction MSE on DRIVE with and without
    \texttt{PRIVEE-U} as the number of clients varies. The defended MSE remains
    consistently above the undefended MSE across all tested client
    counts, indicating that \texttt{PRIVEE-U} continues to provide meaningful
    protection as the federation grows.
    }
    \label{fig:other-datasets-ablation}
\end{figure*}

The CIFAR-10 results show that \texttt{PRIVEE-U} consistently increases
reconstruction error relative to the no-defense baseline for all tested
values of \(\rho\). Moreover, the defended MSE is largest for small
\(\rho\) and decreases monotonically as \(\rho\) increases, which is
consistent with the fact that smaller \(\rho\) yields a larger
perturbation scale \(\sigma=C/\rho\). The DRIVE results show that the
defense remains effective across all tested client counts: for every
configuration, the MSE with defense is clearly higher than the MSE
without defense. Although the absolute MSE varies somewhat with the
number of clients, the protective effect of \texttt{PRIVEE-U} is retained
throughout the tested range. Overall, these ablations support the same
qualitative conclusions as in the main paper: \(\rho\) provides a
predictable control over defense strength, and \texttt{PRIVEE-U}'s defense remains consistent with increasing number of clients.

\section{Our VFL Setting}

Figure~\ref{fig:VFL-schemas} illustrates the two-party VFL workflow used in
this study during both training and inference. In the training phase, the
active and passive parties first align records using shared entity identifiers
without exchanging their raw feature sets. Each party then applies its local
bottom model to its private features and transmits the resulting embeddings to
the federated coordinator. The coordinator concatenates the embeddings and
forwards the joint representation to the top model. Because the active party
holds the class labels, it computes the training loss and initiates
backpropagation. The resulting gradients are propagated through the
coordinator and returned to the corresponding parties, allowing the local and
top-model parameters to be updated without directly sharing raw features. In
the inference phase, the parties similarly compute and transmit local
embeddings, which the coordinator combines to produce the final prediction.

\begin{figure*}[!t]
    \centering
    \includegraphics[width=\textwidth]
    {./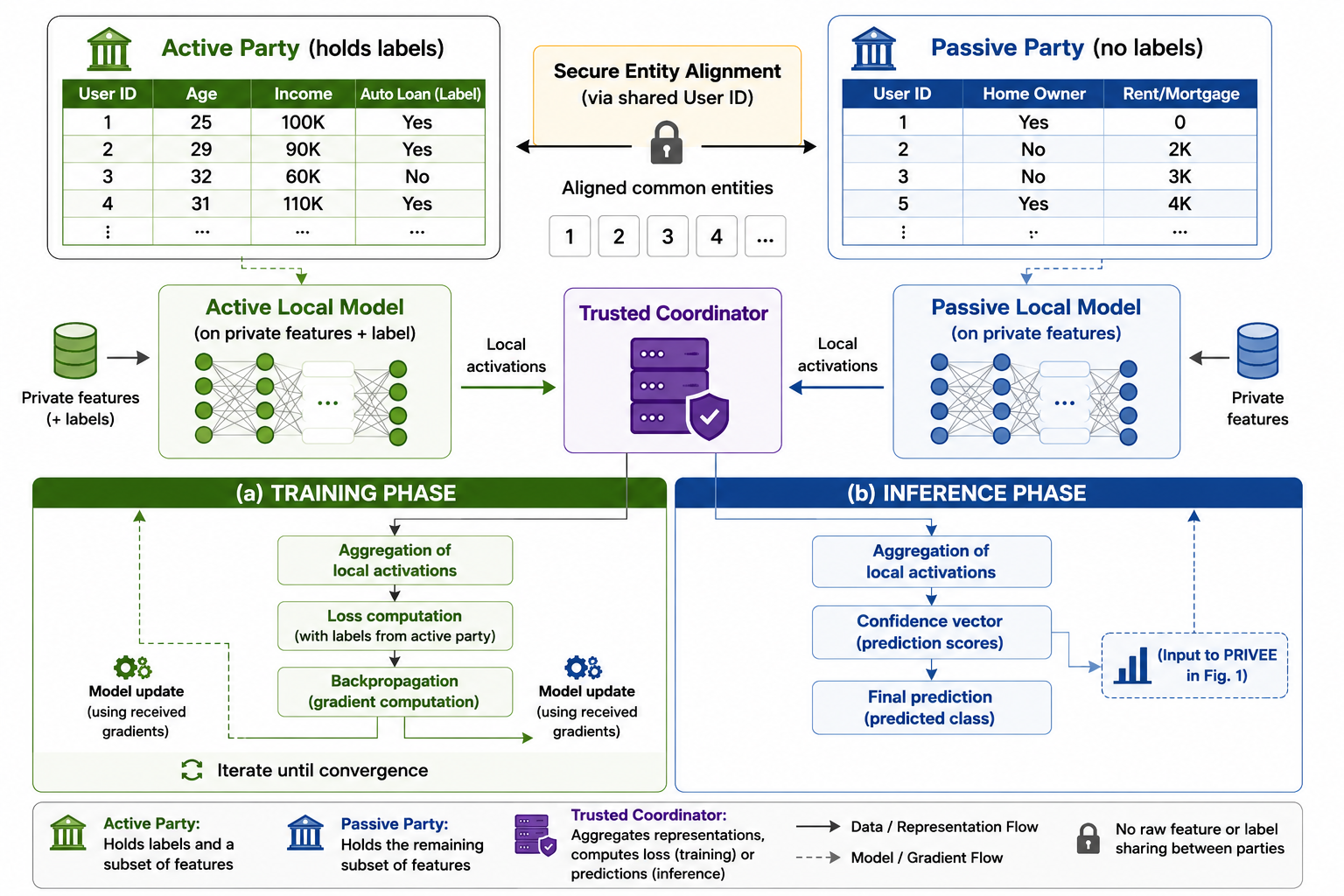}
    \caption{\textbf{Standard two-party vertical federated learning workflow during training and inference.}
    The active party holds a subset of private features and the class labels, whereas the passive party holds complementary private features for overlapping users. The parties first identify their common records through secure entity alignment using shared user identifiers. Each party then processes its private features through a local model and sends only the resulting local activations to a trusted coordinator. 
    \textbf{(a) Training phase:} the coordinator aggregates the local activations, computes the prediction loss using the labels provided by the active party, and performs backpropagation. The resulting gradients are returned to the corresponding parties to update their local models, and this process is repeated until convergence.
    \textbf{(b) Inference phase:} the trained local models independently compute activations for an aligned query instance. The coordinator aggregates these activations to produce the confidence vector and predicted class. The confidence vector constitutes the inference-time output protected by \texttt{PRIVEE} in the subsequent framework. Throughout both phases, raw features and labels are not directly exchanged between the participating parties.}
    \label{fig:VFL-schemas}
\end{figure*}

\section{Hyperparameter and Experimental Settings}
\label{sec:supp_hyperparameters}

During development, we tuned the learning rate and batch size using Optuna \cite{akiba2019optuna}. For each dataset, we conducted 100 optimization trials using the Tree-structured Parzen Estimator (TPE) sampler. The learning rate was sampled logarithmically from $10^{-5}$ to $10^{-1}$, while the batch size was selected from 32,64,128,256,512,1024,2048. After selecting the learning rate and batch size, we determined the number of training epochs separately for each dataset by examining the convergence of the model training and observing train, validation and test accuracies.

The attack settings were selected
based on attack convergence and reconstruction effectiveness, as measured
using the mean squared error (MSE). The exact candidate values, ranges, and
number of configurations evaluated were not retained.
Therefore, we report the final settings used in the experiments and the
criteria used to select them.

\subsection{Choice of Perturbation Amplitude Constant C}

To calibrate the base-amplitude constant $C$, we conducted a bivariate
grid search over
\begin{equation*}
C \in \left\{10^{-5},\,10^{-4},\,10^{-3},\,10^{-2},\,10^{-1},\,0.5,\,0.7,\,1.0\right\}
\end{equation*}
and perturbation intensities $\rho \in [0.05,0.90]$. Our objective was
to maximize the attacker's reconstruction MSE while obtaining a stable,
monotonically decreasing MSE trend as $\rho$ increased. Lower values of
$C$, particularly $C \leq 0.1$, produced a relatively flat MSE response
and provided insufficient baseline protection against GRNA. In contrast,
$C=1.0$ produced erratic changes in the resulting perturbation levels.
We therefore selected \textbf{$C=0.48$} as the base scaling constant. At this
setting, PRIVEE provides strong adversarial protection for small values
of $\rho$, while the protection strength decreases smoothly and
predictably as $\rho$ increases. This yields a reliable mechanism for
controlling the perturbation magnitude without degrading inference
accuracy.

\subsection{Model Setup}
\label{sec:model-setup}

We adopt the N-party VFL framework of~\cite{wei2022vertical}, involving one active and several passive parties. Both active and passive parties use a 128-unit hidden layer followed by a 64-unit output layer. The outputs are aggregated via direct concatenation. For the ResNet architecture, we use stacked blocks, each containing two 3×3 convolutional layers with batch normalization and ReLU activation. The LR models map input features directly to the output class space, with outputs from both parties summed to compute the final logits. 

\subsection{Dataset-Specific Training and Attack Settings}

Table~\ref{tab:dataset_hyperparameters} reports the final training and attack
settings for each dataset. Dataset-specific values override the default values
in the implementation.  The default GRNA learning rate was $0.01$, except for MNIST, for which the
dataset-specific learning rate was set to $0.1$. The CIFAR GIA implementation
uses default values of 100 attack iterations and a learning rate of
$10^{-3}$. The MNIST configuration overrides the generic GIA defaults by
using 500 iterations and a learning rate of $0.1$.

\begin{table*}[t]
    \centering
    \caption{Dataset-specific model-training and attack settings. A dash
    indicates that the corresponding attack was not used or that its setting
    was not specified for the dataset.}
    \label{tab:dataset_hyperparameters}
    \resizebox{\textwidth}{!}{%
    \begin{tabular}{lcccccccc}
        \toprule
        \textbf{Dataset}
        & \textbf{Batch Size}
        & \textbf{VFL Epochs}
        & \textbf{Classes}
        & \textbf{VFL LR}
        & \textbf{GRNA Epochs}
        & \textbf{GRNA LR}
        & \textbf{GIA Iterations}
        & \textbf{GIA LR} \\
        \midrule

        MNIST
        & 128
        & 60
        & 10
        & $5\times10^{-5}$
        & 50
        & $0.1$
        & 500
        & $0.1$ \\

        DRIVE
        & 128
        & 450
        & 11
        & $1\times10^{-4}$
        & 100
        & $0.01$
        & 500
        & 0.1 \\

        ADULT
        & 128
        & 100
        & 2
        & $1\times10^{-4}$
        & 100
        & $0.01$
        & 500
        & 0.1 \\

        CIFAR-10
        & 128
        & 30
        & 10
        & $2.085\times10^{-4}$
        & 10
        & $0.01$
        & 100
        & $1\times10^{-3}$ \\

        CIFAR-100
        & 512
        & 10
        & 100
        & $0.01$
        & 100
        & $0.01$
        & 100
        & $1\times10^{-3}$ \\
        \bottomrule
    \end{tabular}%
    }

    \vspace{2pt}
\end{table*}

\begin{table*}[!t]
    \centering
    \caption{Hyperparameter-selection procedure.}
    \label{tab:hyperparameter_selection}
    \resizebox{\textwidth}{!}{%
    \begin{tabular}{llll}
        \toprule
        \textbf{Hyperparameter}
        & \textbf{Selection Method}
        & \textbf{Final Settings}
        & \textbf{Selection Criterion} \\
        \midrule

        VFL learning rate
        & Optuna optimizer
        & Dataset-specific values in Table~\ref{tab:dataset_hyperparameters}
        & Stable convergence and predictive performance \\

        Attack learning rate
        & Implementation setting
        & Attack- and dataset-specific values in
          Table~\ref{tab:dataset_hyperparameters}
        & Stable attack convergence and reconstruction MSE \\

        Training batch size
        & Optuna optimizer
        & 128 or 512, depending on the dataset
        & Stable training and computational feasibility \\

        CIFAR GIA attack batch size
        & Implementation setting
        & 1
        & GPU-memory feasibility \\

        CIFAR GIA micro-batch size
        & Implementation setting
        & 1
        & Reduction of peak GPU-memory usage \\
        \bottomrule
    \end{tabular}%
    }
\end{table*}

\subsection{Shared Privacy and Federation Parameters}

Table~\ref{tab:shared_parameters} summarizes the settings shared across
datasets and experimental configurations.

\begin{table*}[t]
\centering
\footnotesize
\setlength{\tabcolsep}{6pt}
\renewcommand{\arraystretch}{1.15}
\caption{Shared privacy, federation, and optimization settings.}
\label{tab:shared_parameters}

\begin{tabular}{p{0.38\textwidth}p{0.56\textwidth}}
\hline
\textbf{Parameter (Value)} & \textbf{Description} \\
\hline

Number of organizations (2--$N$) &
One active organization and $\geq$2 passive organizations. \\

DP privacy budget $\varepsilon$ (GRN tables: $0.5$, $0.7$; GIA tables: $0.5$, $1$) &
The DP baseline is swept at two budgets per attack family. \\

Differential privacy parameter $\delta$ ($1\times10^{-5}$) &
Failure probability used by the Gaussian mechanism. \\

Sensitivity (0.1) &
Sensitivity used to compute the Gaussian noise scale. \\

Default GRNA learning rate (0.01) &
Used when no dataset-specific GRNA learning rate is provided. \\

Optimizer (Adam) &
Optimizes the reconstructed passive-party inputs in GIA and GRNA. \\

Attack loss (MSE) &
Measures the discrepancy between generated and released confidence vectors. \\

Feature partition (Dataset-dependent) &
Features are partitioned between the active and passive organizations. \\
\hline
\end{tabular}
\end{table*}
\subsection{Model Architecture Settings}

The model architectures used for MNIST, DRIVE, ADULT, CIFAR-10, and CIFAR-100
are summarized in Table~\ref{tab:model_architectures}.

The hidden-layer widths and output embedding dimensions of the fully connected active- and passive-party models are fixed architectural settings, not tuned per dataset: both parties use a 128-unit hidden layer followed by a 64-unit output layer (Section~\ref{sec:model-setup}), as reported in Table~\ref{tab:model_architectures}.

\subsection{Attack Implementation Settings}

Table~\ref{tab:attack_settings} reports the settings directly encoded in the
GIA and GRNA implementations.

\begin{table*}[t]
    \centering
    \footnotesize 
    \caption{Implementation settings for the gradient inversion attack (GIA)
    and generative regression network attack (GRNA).}
    \label{tab:attack_settings}
    \begin{tabular}{lll}
        \toprule
        \textbf{Attack}
        & \textbf{Parameter}
        & \textbf{Final Setting} \\
        \midrule

        \multirow{8}{*}{Generic GIA}
        & Reconstructed passive input initialization
        & All zeros \\
        & Optimized variable
        & Reconstructed passive-party input \\
        & Optimizer
        & Adam \\
        & Objective
        & MSE between predicted and target confidence vectors \\
        & Default learning rate
        & $1\times10^{-3}$ \\
        & Default number of iterations
        & 500 \\
        & Reconstruction bounds
        & $[0,1]$ \\
        & Progress-reporting interval
        & Every 100 iterations \\
        \midrule

        \multirow{10}{*}{CIFAR GIA}
        & Reconstructed passive input initialization
        & All zeros \\
        & Optimized variable
        & Reconstructed passive-party image \\
        & Optimizer
        & Adam \\
        & Objective
        & MSE between predicted and target confidence vectors \\
        & Default learning rate
        & $1\times10^{-3}$ \\
        & Default number of iterations
        & 100 \\
        & Attack batch size
        & 1 \\
        & Passive-model micro-batch size
        & 1 \\
        & Reconstruction bounds
        & $[0,1]$ \\
        & Numerical precision
        & FP16 automatic mixed precision \\
        \midrule

        \multirow{8}{*}{GRNA}
        & Reconstructed passive input initialization
        & Samples from a standard normal distribution \\
        & Optimized variable
        & Reconstructed passive-party training data \\
        & Optimizer
        & Adam \\
        & Objective
        & MSE between generated and target confidence vectors \\
        & Default learning rate
        & $0.01$ in the experimental configuration \\
        & Model operating mode
        & Evaluation mode \\
        & Model parameter updates
        & Disabled; all trained model parameters are frozen \\
        & Optimization scope
        & Reconstructed passive-party inputs only \\
        \bottomrule
    \end{tabular}%
\end{table*}
For the generic GIA implementation, the passive-party input estimate is initialized to zero and clipped to $[0,1]$ after each optimization step. For GRNA, it is initialized with independent standard normal samples. During both attacks, trained VFL model parameters remain fixed, and only the reconstructed passive-party inputs are optimized.

In the standard GRNA implementation, mini-batch losses are averaged before one optimization step per epoch. In the CIFAR-specific implementation, one optimization step is performed per mini-batch, with the reported epoch loss computed as the average mini-batch loss.

\subsection{Hyperparameter-Selection Procedure}

Table~\ref{tab:hyperparameter_selection} summarizes the available information
about the hyperparameter-selection process.  Several candidate learning-rate and batch-size values were evaluated during
development.  However, for attack settings, the exact number of values and search ranges are not retained. Consequently, the paper reports the final
parameter settings and the criteria used to select them.

\subsection{Data Preprocessing}
All code required for data preprocessing is included in the supplementary code repository. In particular, the \texttt{create\_dataset.py} script implements the dataset preparation procedures for MNIST, CIFAR-10, CIFAR-100, DRIVE, and ADULT. The script loads the corresponding raw data, performs the required dataset-specific preprocessing and formatting, and generates the processed files used by the experimental pipeline. It can be executed using \texttt{python create\_dataset.py --dataset DATASET}, where \texttt{DATASET} is one of \texttt{MNIST}, \texttt{CIFAR10}, \texttt{CIFAR100}, \texttt{DRIVE}, or \texttt{ADULT}. Any additional transformations applied when loading the processed datasets are also implemented in the released experimental code.

\paragraph{Code Availability and Documentation.} All source code required to conduct and analyze the experiments is included as supplement in the ``code and Data Supplement'' section. The repository contains the implementations of the proposed methods, baseline defenses, attacks, data-preprocessing procedures, model-training pipelines, evaluation routines, and scripts used to generate the reported results. Upon publication, the complete source code will be released publicly under a license permitting free use for research purposes. The implementations of the new methods include comments describing the main computational steps and identifying the corresponding algorithms, equations, or methodological components presented in the paper.
\begin{table*}[t]
\centering
\footnotesize
\setlength{\tabcolsep}{4pt}
\renewcommand{\arraystretch}{1.1}
\caption{Model architecture settings.}
\label{tab:model_architectures}

\begin{tabular}{p{0.3\textwidth}p{0.37\textwidth}p{0.25\textwidth}}
\hline
\textbf{Component} &
\textbf{Setting} &
\textbf{Value} \\
\hline

\multirow{4}{=}{MNIST/DRIVE active-party model}
& Number of fully connected layers & 2 \\
& Hidden-layer width & 128 \\
& Output embedding dimension & 64 \\
& Hidden activation & ReLU \\
\hline

\multirow{4}{=}{MNIST/DRIVE passive-party model}
& Number of fully connected layers & 2 \\
& Hidden-layer width & 128 \\
& Output embedding dimension & 64 \\
& Hidden activation & ReLU \\
\hline

\multirow{2}{=}{Two-party learning coordinator}
& Number of fully connected layers & 1 \\
& Output activation & Softmax for confidence-score release \\
\hline

\multirow{3}{=}{Multi-party learning coordinator}
& Number of fully connected layers & 2 \\
& Hidden-layer width & 256 \\
& Hidden activation & ReLU \\
\hline

\multirow{9}{=}{CIFAR bottom model}
& Initial convolution channels & 16 \\
& Initial convolution kernel & $3\times3$ \\
& Initial convolution stride & 1 \\
& Initial convolution padding & 1 \\
& Number of residual stages & 3 \\
& Residual blocks per stage & 3 \\
& Stage output channels & 16, 32, and 64 \\
& Stage strides & 1, 2, and 2 \\
& Pooling operation & Adaptive global average pooling \\
\hline

\multirow{2}{=}{CIFAR bottom model output}
& Default embedding dimension & 64 \\
& Final projection & Fully connected layer \\
\hline

\multirow{2}{=}{CIFAR learning coordinator}
& Number of fully connected layers & 1 \\
& Output & Class logits \\
\hline

\multirow{4}{=}{ADULT active-party model}
& Number of fully connected layers & 2 \\
& Hidden-layer width & 128 \\
& Output embedding dimension & 64 \\
& Hidden activation & ReLU \\
\hline

\multirow{4}{=}{ADULT passive-party model}
& Number of fully connected layers & 2 \\
& Hidden-layer width & 128 \\
& Output embedding dimension & 64 \\
& Hidden activation & ReLU \\
\hline

\multirow{2}{=}{ADULT learning coordinator}
& Number of fully connected layers & 1 \\
& Output & Class logits \\
\hline

\multirow{2}{=}{VFL logistic model}
& Bias terms & Disabled \\
& Output activation & Softmax \\
\hline

\end{tabular}
\end{table*}

\paragraph{Random Seed Configuration.}
All experiments involving random initialization, data shuffling, feature
partitioning, or stochastic perturbation were conducted using fixed random
seeds. Before each experimental run, the same seed was assigned to Python's
\texttt{random} module, NumPy, and PyTorch, including all available CUDA
devices. The PyTorch deterministic-execution settings were also enabled where
supported. Specifically, the seeds were initialized using \texttt{random.seed(SEED)}, \texttt{numpy.random.seed(SEED)},
\texttt{torch.manual\_seed(SEED)}, and
\texttt{torch.cuda.manual\_seed\_all(SEED)}. The value of \texttt{SEED} used
for each reported experiment is provided in the released experimental
configuration and execution scripts, allowing the reported results to be
replicated. We have used 10 algorithmic runs for each of our results with different random seeds.

\subsection{Computing Infrastructure.}
Experiments were conducted on a shared Linux-based high-performance computing cluster using NVIDIA GPUs. Depending on availability, jobs ran on NVIDIA T4 or V100 GPUs (16~GB), NVIDIA A30 GPUs (24~GB), or NVIDIA L40S GPUs (48~GB).

\bibliography{aaai2027}

\end{document}